\documentclass[sigconf]{acmart}

\definecolor{Top1}{RGB}{245, 137, 112}
\definecolor{Top2}{RGB}{102, 171, 221}

\settopmatter{
  printacmref=true, 
  printccs=true,
  printfolios=false
}

\def\BibTeX{{\rm B\kern-.05em{\sc i\kern-.025em b}\kern-.08emT\kern-.1667em\lower.7ex\hbox{E}\kern-.125emX}}
    
%
\copyrightyear{2023}
\acmYear{2023}
\setcopyright{rightsretained}
\acmConference[MM '23]{Proceedings of the 31st ACM International Conference on Multimedia}{October 29-November 3, 2023}{Ottawa, ON, Canada}
\acmBooktitle{Proceedings of the 31st ACM International Conference on Multimedia (MM '23), October 29-November 3, 2023, Ottawa, ON, Canada}
\acmDOI{10.1145/3581783.3611846}
\acmISBN{979-8-4007-0108-5/23/10}

\makeatletter
\gdef\@copyrightpermission{
  \begin{minipage}{0.3\columnwidth}
   \href{https://creativecommons.org/licenses/by/4.0/}{\includegraphics[width=0.90\textwidth]{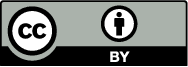}}
  \end{minipage}\hfill
  \begin{minipage}{0.7\columnwidth}
   \href{https://creativecommons.org/licenses/by/4.0/}{This work is licensed under a Creative Commons Attribution International 4.0 License.}
  \end{minipage}
  \vspace{5pt}
}
\makeatother

\settopmatter{printacmref=true}

%
\acmSubmissionID{671}

%

%
\usepackage[linesnumbered,ruled]{algorithm2e}
\usepackage{thmtools}
\usepackage{balance}
\usepackage{thm-restate}
\usepackage{cleveref}
\usepackage{multirow}
\usepackage{color}
\usepackage{tcolorbox}
\tcbuselibrary{listings,skins}
\usepackage{tikz}
\newenvironment{qbox}
	{\begin{tcolorbox}[enhanced jigsaw, drop shadow=black!50!white,colback=white, width=0.95\linewidth, center, left=2pt,right=2pt,top=1pt,bottom=1pt,halign=center]}
	{\end{tcolorbox}}

\newenvironment{qbox-tight}
	{\begin{tcolorbox}[enhanced jigsaw, drop shadow=black!50!white,colback=white, width=\linewidth, center, left=2pt,right=2pt,top=1pt,bottom=1pt,halign=center]}
	{\end{tcolorbox}}
\definecolor{Top1}{RGB}{245, 137, 112}
\begin{document}
\title{When Measures are Unreliable: Imperceptible Adversarial Perturbations toward Top-$k$ Multi-Label Learning}

\author{Yuchen Sun}
\affiliation{%
 \institution{IIP, ICT, CAS}
 \streetaddress{}
 \city{}
 \state{}
 \country{}}
\email{sunyuchen22s@ict.ac.cn}

\author{Qianqian Xu}
\authornotemark[1]
\affiliation{%
 \institution{IIP, ICT, CAS}
 \city{}
 \country{}
}
\email{xuqianqian@ict.ac.cn}

\author{Zitai Wang}
\affiliation{%
 \institution{SKLOIS, IIE, CAS}
 \city{}
 \country{}
}
\affiliation{%
 \institution{SCS, UCAS}
 \city{}
 \country{}
}
\email{wangzitai@iie.ac.cn}

\author{Qingming Huang}
\authornotemark[1]
\affiliation{%
 \institution{SCST, UCAS}
 \city{}
 \country{}
}
\affiliation{%
 \institution{IIP, ICT, CAS}
 \city{}
 \country{}
}
\affiliation{%
 \institution{BDKM, CAS}
 \city{}
 \country{}
}
\affiliation{%
 \institution{Peng Cheng Laboratory}
 \city{}
 \country{}
}
\email{qmhuang@ucas.ac.cn}


\begin{abstract}
With the great success of deep neural networks, adversarial learning has received widespread attention in various studies, ranging from multi-class learning to multi-label learning. However, existing adversarial attacks toward multi-label learning only pursue the traditional visual imperceptibility but ignore the new perceptible problem coming from measures such as Precision@$k$ and mAP@$k$. Specifically, when a well-trained multi-label classifier performs far below the expectation on some samples, the victim can easily realize that this performance degeneration stems from attack, rather than the model itself. Therefore, an ideal multi-labeling adversarial attack should manage to not only deceive visual perception but also evade monitoring of measures. To this end, this paper first proposes the concept of measure imperceptibility. Then, a novel loss function is devised to generate such adversarial perturbations that could achieve both visual and measure imperceptibility. Furthermore, an efficient algorithm, which enjoys a convex objective, is established to optimize this objective. Finally, extensive experiments on large-scale benchmark datasets, such as PASCAL VOC 2012, MS COCO, and NUS WIDE, demonstrate the superiority of our proposed method in attacking the top-$k$ multi-label systems. Our code is available at: \textbf{\url{https://github.com/Yuchen-Sunflower/TKMIA}}.
\end{abstract}

\begin{CCSXML}
<ccs2012>
   <concept>
       <concept_id>10010147.10010257.10010258.10010259.10003268</concept_id>
       <concept_desc>Computing methodologies~Ranking</concept_desc>
       <concept_significance>500</concept_significance>
       </concept>
   <concept>
       <concept_id>10010147.10010257.10010293.10003660</concept_id>
       <concept_desc>Computing methodologies~Classification and regression trees</concept_desc>
       <concept_significance>500</concept_significance>
       </concept>
 </ccs2012>
\end{CCSXML}

\ccsdesc[500]{Computing methodologies~Ranking}
\ccsdesc[500]{Computing methodologies~Classification and regression trees}

\keywords{Top-$k$ Multi-Label Learning, Adversarial Perturbation, Measure Imperceptibility}

\received{5 May 2023}
\received[revised]{30 June 2023}
\received[accepted]{25 July 2023}

\maketitle
\renewcommand{\thefootnote}{\fnsymbol{footnote}}
\footnotetext[1]{Corresponding authors.}

\begin{figure*}[t]
  \centering
  \includegraphics[width=0.97\linewidth]{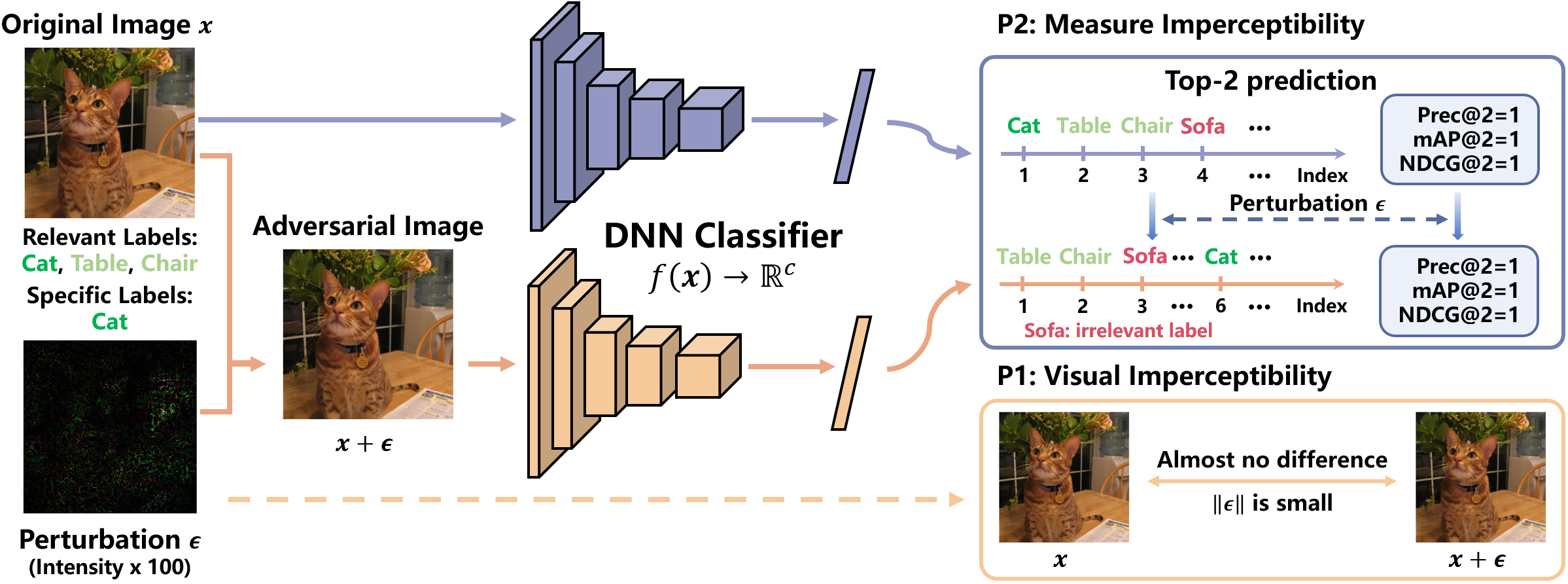}
\caption{The illustration of the proposed imperceptible adversarial perturbation.}
  \vspace{-3mm}
\label{fig:pipeline}
\end{figure*}

\section{Introduction}
\label{sec:introduction}
With the rapid development of Deep Neural Networks (DNNs), machine learning has achieved great success in various tasks, such as image recognition \cite{DBLP:conf/mm/VuLEJN20, DBLP:conf/iccv/ZhaoYZGH021}, text detection \cite{DBLP:conf/aaai/QiaoTCXNPW20, DBLP:conf/aaai/WangLZYBXHW020}, and object tracking \cite{DBLP:conf/cvpr/Wang0BHT19, DBLP:conf/cvpr/MeinhardtKLF22}. However, recent studies have proposed that DNNs are highly vulnerable to the well-crafted perturbations \cite{DBLP:conf/icdm/SongJHH18, DBLP:conf/ijcai/XiaoLZHLS18, DBLP:conf/cvpr/WangDXZCW022}. Due to their intriguing vulnerability, these unguarded models usually suffer from malicious attacks and their performance can be easily affected by slight perturbations hidden behind images. Recently, many pieces of research on adversarial perturbations have found the underlying weakness in practical multimedia applications of DNNs like key text detection \cite{DBLP:conf/mm/XiangLGL022, DBLP:conf/aaai/JinJZS20}, video analysis \cite{DBLP:conf/cvpr/WeiCWJ22, DBLP:conf/ijcai/WangSY21}, and image super-restoration \cite{DBLP:conf/mm/YueLWLL21,
DBLP:conf/iccv/ChoiZKHL19}. The emergence of such problems, which seriously prevent the exploration of various studies and safe applications of DNNs, has raised the attention of many researchers to search for new undetected perturbation technologies, so as to exclude more defense blind spots. 



So far, researchers have developed numerous \textit{multi-class} adversarial perturbation algorithms to protect systems. Most of these effective approaches have been widely adopted in real-world classification scenarios \cite{DBLP:journals/corr/abs-2206-12685, DBLP:conf/nips/SriramananABR20, DBLP:conf/nips/KangSDT21}. In recent years, more attention has shifted to exploring potential threats in \textit{multi-label} learning \cite{DBLP:conf/icdm/SongJHH18, DBLP:journals/pami/MelacciCSDBGR22}. In a multi-label system, common inference strategies can be divided into two categories: \textit{threshold-based} ones and \textit{ranking-based} ones. In the threshold-based approach, the classifier compares the label score with a pre-defined threshold to decide whether each label is relevant to the input sample. Although this strategy is intuitive, the optimal threshold is generally hard to determine. By contrast, the ranking-based approach returns the $k$ labels with the highest scores as the final prediction, whereas the hyper-parameter $k$ is much easier to select according to the scenario. Hence, top-$k$ multi-label learning ($\textup{T}_k$ML) has been widely applied to various multimedia-related tasks, such as information retrieval \cite{DBLP:conf/mm/Bleeker22,
DBLP:conf/mm/BermejoBYMS0020} and recommendation systems \cite{DBLP:conf/mm/LiXJCH20, DBLP:conf/mm/ChenX00CH22}.



Nevertheless, the majority of current studies on multi-label adversarial perturbations still follow the attack principle on multi-class learning, overly valuing the property of \textbf{visual imperceptibility}, but overlooking the new issues that come from the new scenarios. To be specific, for most single-label (multi-class/binary) classifications, when the only relevant category is misclassified, the victim may intuitively attribute the issue to the poor performance of the model, instead of considering whether the attack happens. In contrast, in a multi-label classification scenario, the victim could more likely perceive attacks when all relevant labels are ranked behind the top-$k$ position, as the classification performance of the model is well below the expectation. In view of this, we propose the concept of \textbf{measure imperceptibility}. Empirically, we find that it is quite easy to notice the existence of attack by common multi-label measures ranging from Accuracy, Top-$k$ Accuracy (T$_k$Acc), to ranking-based measures such as Precision at $k$ (P@$k$) \cite{DBLP:conf/ijcai/LiuDLZ16}, mean Average Precision at $k$ (mAP@$k$) \cite{DBLP:conf/cikm/LiGX17}, and Normalized Discounted Cumulative Gain at $k$ (NDCG@$k$) \cite{DBLP:journals/corr/abs-2210-03968}. Hence, a natural question arises:
\begin{qbox}
    \textit{Can we craft an perturbation that satisfies both \textbf{(P1)} visual imperceptibility and \textbf{(P2)} measure imperceptibility?}
\end{qbox}

To answer this question, we introduce a novel algorithm to generate such perturbations for $\textup{T}_k$ML. Specifically, the core challenge lies in constructing an objective function that can disrupt the top-$k$ ranking
results in a well-designed manner. To illustrate this, we provide a simple example in Figure \ref{fig:pipeline}, where the input image $\boldsymbol{x}$ has three relevant labels. Our goal is to generate adversarial perturbations that can prevent the classifier from detecting a specified label such as \texttt{Cat}. In order to meet \textbf{(P1)}, we need to add a slight perturbation $\boldsymbol{\epsilon}$ into the image, which makes it almost no different from the original image. To satisfy \textbf{(P2)}, we can no longer attack all relevant categories. Instead, the other relevant labels like \texttt{Table} and \texttt{Chair} should be ranked higher to compensate for the performance degeneration induced by the attack on the specified label \texttt{Cat}. Then, the top-$k$ measures, such as P@$k$, mAP@$k$, and NDCG@$k$, will not drop significantly while the specified label has been ranked far behind the top-$k$ position. In the extreme case, the classifier can never detect certain important categories and produce degraded classification performance, which may result in serious consequences for multimedia applications, such as multimodal emotion recognition \cite{DBLP:conf/aaai/MittalBCBM20} and text-to-image generation \cite{DBLP:conf/mm/HuangMWW022}.

Inspired by this example, we construct a novel objective that not only guarantees slight perturbations but also pursues well-designed ranking results. When optimizing this objective, both visual imperceptibility and measure imperceptibility can be achieved. Furthermore, we also propose an effective optimization algorithm to efficiently optimize the perturbation. Finally, we validate the effectiveness of our proposed method on three large-scale benchmark datasets (PASCAL VOC 2012, MS COCO 2014, and NUS WIDE). To summarize, the main contributions of this work are three-fold:

\begin{itemize}
    \item To the best of our knowledge, we are the pioneer to introduce the vital concept of measure imperceptibility in adversarial multi-label learning. 
    \item We propose a novel imperceptible attack method toward the $\textup{T}_k$ML problem, which enjoys a convex objective. And an efficient algorithm is established to optimize the proposed objective.
    \item Extensive experiments are conducted on large-scale benchmark datasets including PASCAL VOC 2012, MS COCO 2014, and NUS WIDE. And the empirical results show that the proposed attack method can achieve both visual and measure imperceptibility.
\end{itemize}

\section{Related Work}
\label{sec:related_work}
In this section, we will present a brief introduction to multi-class adversarial perturbations, multi-label adversarial perturbations, and top-$k$ ranking optimization.

\subsection{Multi-class Adversarial Perturbations}
\label{subsec:rel_multi_class}


\textbf{Image-specific Perturbations} is a vital part of adversarial learning, which aims to attack each specific image with a single perturbation. Szegedy \textit{et al.} \cite{DBLP:journals/corr/SzegedyZSBEGF13} first observes the vulnerability of DNNs and gives the corresponding robustness analysis. Then, Goodfellow \textit{et al.} \cite{DBLP:journals/corr/GoodfellowSS14} proposes FGSM and describes the important concept of untargeted attack and targeted attack for image-specific perturbations. Inspired by these works, more perturbation methods are gradually emerging to explore these two directions. Specifically, the untargeted attack is an instance-wise attack that increases the scores of irrelevant categories to replace relevant categories in the top-$k$ set. For example, Deepfool \cite{DBLP:conf/cvpr/Moosavi-Dezfooli16}, as a geometric multi-class untargeted attack, carefully crafts the small perturbation to forge the top-2 class into the correct class. On top of this, $k$Fool \cite{DBLP:conf/wacv/TursynbekPO22} extends the perturbation to the top-$k$ multi-class learning, designed as the corresponding attacks to replace the ground-truth labels in the top-$k$ set with other irrelevant labels. Different from the above attacks, the targeted attack uses specific negative labels to perturb the predictions. FGSM \cite{DBLP:journals/corr/GoodfellowSS14} and I-FGSM \cite{DBLP:conf/iclr/KurakinGB17} conduct back-propagation with the gradient of DNNs to achieve the exploration of minimum efficient perturbation. C\&W \cite{DBLP:conf/sp/Carlini017} presents a typical multi-class adversarial learning framework under the top-1 protocol to implement the targeted attack. On the basis of C\&W, $\textup{CW}^k$ \cite{DBLP:conf/cvpr/ZhangW20} presents an ordered top-$k$ attack by adversarial distillation framework toward top-$k$ multi-class targeted attack. Based on FGSM-based methods, MI-FGSM \cite{DBLP:conf/cvpr/DongLPS0HL18} further introduces the momentum term into an iterative algorithm to avoid coverage into the local optimum. 

Our attack method has a similar effect as the untargeted attack, \textit{i.e.}, it replaces the relevant category in the top-$k$ prediction with other several irrelevant labels. The difference is that our perturbation only focuses on specified relevant categories to promise a slight change in measures.

\textbf{Image-agnostic Perturbations} is another type of emerging attack based on the untargeted attack, which dedicates to generating an overall perturbation to perturb all points. To the best of our knowledge,  Moosavi-Dezfooli \textit{et al.} \cite{DBLP:conf/cvpr/Moosavi-Dezfooli17} first introduces Universal Adversarial Perturbations (UAPs) method, which is independent to the single example. $k$UAP \cite{DBLP:conf/wacv/TursynbekPO22}, as the top-$k$ version of UAPs, focuses on top-$k$ universal adversarial learning in multi-class learning. SV-UAP \cite{DBLP:conf/cvpr/KhrulkovO18} utilizes the singular vectors of the Jacobian matrices of the feature maps to accumulate the minimum perturbation. As interest in UAPs grows recently, a series of related approaches were carried out to further analyze the existing problem in various DNNs-based scenarios \cite{DBLP:conf/cvpr/YuYTK22, DBLP:conf/iccv/0009JLZ0WH19, DBLP:conf/aaai/LiYWYH22, DBLP:conf/cvpr/ZhangBIK20, DBLP:journals/tip/DengK22}.

\subsection{Multi-label Adversarial Perturbations}
\label{subsec:rel_multi_label}

As far as we know, the first study toward multi-label adversarial attack is \cite{DBLP:conf/icdm/SongJHH18}, in which the authors propose a type of targeted attack, extending image-specific perturbations to multi-label learning. In particular, ML-DP and ML-CW are the multi-label extensions of DeepFool and C\&W, respectively. Besides, Melacci \textit{et al.} \cite{DBLP:journals/pami/MelacciCSDBGR22} observe the importance of domain-knowledge constraints for detecting adversarial examples, which analyzes the robustness problem of untargeted attacks in multi-label learning. After realizing the importance of top-k learning in multi-label classification, Hu \textit{et al.} \cite{DBLP:conf/iccv/HuK0L21} first introduces the image-specific perturbations into top-$k$ multi-label learning, where the untargeted/targeted attack attempts to replace all ground-truth labels in the top-$k$ range with arbitrary/specific $k$ negative labels. Also, this work is the first to study the vulnerability of $\textup{T}_k$ML. 

Despite the great work toward multi-class/label adversarial learning, there is no approach focused on the invisibility of perturbations for measures. By careful consideration, the previous works have been more concerned with the efficiency of the perturbation while ensuring that the perturbation is not visually recognizable. However, the occurrence of these attacks can be easily identified when we pay slight attention to the numerical changes in the measure. In this work, we thus propose an imperceptible adversarial perturbation that could easily deceive the common multi-label metrics to fill this study gap.


\vspace{-2mm}
\subsection{Top-$k$ Ranking Optimization}
\label{subsec:rel_topk}

Top-$k$ ranking optimization has a significant amount of applications in various fields, such as binary classification \cite{DBLP:conf/pkdd/FreryHSCH17, DBLP:conf/nips/FanLYH17}, multi-class classification \cite{DBLP:conf/iclr/BerradaZK18, DBLP:conf/kdd/ChangYY17, DBLP:journals/pami/WangXYHCH23}, and multi-label learning \cite{DBLP:conf/nips/HuY0L20}. Due to the discontinuity of individual top-$k$ error, it is computationally hard to directly minimize the top-$k$ loss. Therefore, the practical solution to the top-$k$-related problem usually relies on minimizing a differentiable and convex surrogate loss function \cite{DBLP:conf/icml/YangK20}. In particular, one of the relevant surrogates is the average top-$k$ loss (AT$_k$) \cite{DBLP:conf/nips/FanLYH17}, which generalizes maximum loss and average loss to better fit the optimization in different data distributions, especially for imbalanced problems. On top of this work, Hu \textit{et al.} further propose a summation version of the top-$k$ loss called "Sum of Ranked Range (SoRR)", and demonstrate its effectiveness in top-$k$ multi-label learning \cite{DBLP:conf/nips/HuY0L20}. Both of these surrogates are convex with respect to each individual loss, which means optimizing the objective function only needs simple gradient-based methods. 

Inspired by these discoveries, we introduce the AT$_k$ into our algorithm to further improve the performance of our method. More details about the technology we used to optimize perturbations could be found in Sec.\ref{subsec:topk_optimization}.

\section{Preliminaries}
\label{subsec:preliminary}

In this section, we first give the used notation in the following content and introduce the average of top-$k$ optimization method.

\subsection{Notations}
\label{subsec:notation}

Generally, we assume that the samples are drawn \textit{i.i.d} from the space $\mathcal{Z} = \mathcal{X} \times \mathcal{Y}$, where $\mathcal{X}$ is the input space; $\mathcal{Y} = \{0, 1\}^c$ is the label space with 1 for positive and 0 for negative; $c$ represents the number of labels. In multi-label learning, each input $\boldsymbol{x} \in \mathcal{X}$ is associated with a label vector $\boldsymbol{y} \in \mathcal{Y}$. We use $Y_p = \{ i|y_i = 1 \}$ and $Y_n = \{ j|y_j = 0 \}$ to denote the relevant labels and irrelevant labels of $\boldsymbol{x}$, respectively, where $y_i$ is the $i$-th element of $\boldsymbol{y}$. 

Then, our task is to learn a multi-label prediction function $F(\boldsymbol{x}) = [f_1(\boldsymbol{x}), \dots, f_c(\boldsymbol{x})]: \mathcal{X} \rightarrow \mathbb{R}^c$ to estimate the relevancy score for each class. For each component of $F$, $f_i(\boldsymbol{x}) \in [0, 1]$ denotes the prediction score of $i$-th class. In ranking values, we define $f_{[i]}(\boldsymbol{x})$ as the $i$-th largest element of $F(\boldsymbol{x})$, that is, $f_{[1]}(\boldsymbol{x}) \geq f_{[2]}(\boldsymbol{x}) \geq \dots \geq f_{[c]}(\boldsymbol{x})$. In the following content, we uniformly assume that the ties between any two prediction scores are completely broken, \textit{i.e.}, $f_{[1]}(\boldsymbol{x}) > \cdots > f_{[c]}(\boldsymbol{x})$. 




\subsection{Average of Top-$k$ Optimization}
\label{subsec:topk_optimization}

Top-$k$ optimization, which is a useful tool to construct our objective, has been widely applied to various tasks like information retrieval \cite{DBLP:conf/acl/LinJHW20} and recommendation systems \cite{DBLP:conf/aaai/MaMZWLC21}. For T$_k$ML, it is intuitive to construct an objective function such that the lowest value of the predicted score for the relevant category is higher than the $(k + 1)$-th largest score \cite{DBLP:conf/nips/HuY0L20}: 
\begin{equation*}
  \min_{y \in Y_p} f_{y}(\boldsymbol{x}) > f_{[k + 1]}(\boldsymbol{x}).
\end{equation*}
And this idea has been successfully applied to generate top-$k$ multi-label adversarial perturbation \cite{DBLP:conf/iccv/HuK0L21}, making the perturbed image satisfy that:
\begin{equation*}
    \max_{y \in Y_p}f_y(\boldsymbol{x} + \boldsymbol{\epsilon}) < f_{[k + 1]}(\boldsymbol{x} + \boldsymbol{\epsilon}).
\end{equation*}
In other words, this objective expects the highest score of the relevant labels is not ranked before the top-$k$ position. However, due to the non-differentiable ranking operator, the individual top-$k$ loss is not easy to optimize \cite{DBLP:conf/nips/FanLYH17, DBLP:conf/icml/YangK20}. To address this issue, a series of surrogate loss functions including the sum of ranked range \cite{DBLP:conf/nips/HuY0L20}, average top-$k$ loss \cite{DBLP:conf/nips/FanLYH17}, are proposed. To be specific, for a set of $F(\boldsymbol{x})$, the average top-$k$ loss $\phi_k (F(\boldsymbol{x}))$ is defined as
\begin{equation}
    \phi_k (F(\boldsymbol{x})) = \frac{1}{k} \sum_{i=1}^{k}f_{[i]}(\boldsymbol{x}),
    \label{eq:topk_sum}
\end{equation}
which is proven convex \textit{w.r.t} each component. For this expression, we could directly see that $\phi_k (F(\boldsymbol{x}))$ is the upper bound of $f_{[k]}(\boldsymbol{x})$, \textit{i.e.}, $\phi_k (F(\boldsymbol{x})) \geq f_{[k]}(\boldsymbol{x})$. Hence, this fact provides a convex equivalent form for the original top-$k$ optimization problem.


\section{Measure Imperceptible Attack}
\label{sec:method}

In this section, we define the concept of measure-imperceptible attack and further present the corresponding imperceptible attack algorithm. 

\subsection{Problem Formation}
\label{sec:formation}

This measure-imperceptible attack attempts to attack a few categories while not degenerating the performance of ranking measures. To achieve this goal, we need to
\begin{itemize}
    \item \textbf{G1}: rank the specific relevant categories lower than the $k$-th position;
    \item \textbf{G2}: rank the other relevant categories higher than the $k$-th position as much as possible;
    \item \textbf{G3}: generate an visual imperceptible perturbation $\boldsymbol{\epsilon}$ that is as smaller as possible.
\end{itemize}
According to these sub-goals, we could utilize the existing symbols to form the following optimization problem:
\begin{equation}
    \begin{aligned}
        \min_{\boldsymbol{\epsilon}} \left \| \boldsymbol{\epsilon} \right \|_2 & + \underbrace{f_{[k]}(\boldsymbol{x} + \boldsymbol{\epsilon}) - \min_{y \in Y_p \setminus \mathcal{S}}f_y(\boldsymbol{x} + \boldsymbol{\epsilon})}_{c_2} \\
        s.t. & \underbrace{f_{[k + 1]}(\boldsymbol{x} + \boldsymbol{\epsilon}) \geq \max_{s \in \mathcal{S}} f_{s}(\boldsymbol{x} + \boldsymbol{\epsilon})}_{c_1}, \\
    \end{aligned}
    \label{eq:targeted}
\end{equation}
where $\mathcal{S}$ is the specified label set that includes partial indices of the relevant categories of a sample $\boldsymbol{x}$.

Specifically, the condition $c_1$ guarantees that all specified categories cannot be ranked higher than the top-$k$ position, $c_2$ requires the top-$k$ set to be filled with a certain amount of other relevant labels, and the regularizer limits the size of perturbations, respectively. These three constraints correspond to the above three sub-goals. However, it is challenging to optimize due to the complex constraints of this problem. Consequently, we expect to seek a more easily optimized formulation to solve this problem. 

\subsection{Optimization Relaxation}
\label{sec:relaxation}
To optimize the problem in Eq.(\ref{eq:targeted}), it is necessary to build a feasible optimization framework. To begin with, we could notice that for a well-trained model, most of the specified categories of an image are possibly ranked higher than the $k$-th largest score, that is, 
\begin{equation*}
    \max_{s \in \mathcal{S}}f_s(\boldsymbol{x} + \boldsymbol{\epsilon}) - f_{[k + 1]}(\boldsymbol{x} + \boldsymbol{\epsilon}) \geq 0.
\end{equation*}
Meanwhile, there exist a few ground-truth labels that are not in the top-$k$ set, and they satisfy the condition that: 
\begin{equation*}
    f_{[k]}(\boldsymbol{x} + \boldsymbol{\epsilon}) - \min_{y \in Y_p \setminus \mathcal{S}}f_{y}(\boldsymbol{x} + \boldsymbol{\epsilon}) \geq 0.
\end{equation*}
Thus, the original problem (\ref{eq:targeted}) could be transformed to reduce the loss generated by combining the above two equations and the perturbation norm. Utilizing the Lagrangian equation, we obtain this surrogate loss function:

\begin{equation}
    \begin{aligned}
        \min_{\boldsymbol{\epsilon}} \frac{\alpha}{2} \left \| \boldsymbol{\epsilon} \right \|_2^2 
        & + \left [ \max_{s \in \mathcal{S}}f_s(\boldsymbol{x} + \boldsymbol{\epsilon}) - f_{[k + 1]}(\boldsymbol{x} + \boldsymbol{\epsilon}) \right ]_+ \\
        & + \left [ f_{[k]}(\boldsymbol{x} + \boldsymbol{\epsilon}) - \min_{y \in Y_p \setminus \mathcal{S}}f_y(\boldsymbol{x} + \boldsymbol{\epsilon}) \right ]_+,
    \end{aligned}
    \label{eq:targeted-surrogate}
\end{equation}
where $\alpha$ is the trade-off hyper-parameter. However, the potential gradient sparsity problem \cite{DBLP:conf/icml/YangK20} of Eq.(\ref{eq:targeted-surrogate}) makes the optimization very hard. To solve this problem, we could further leverage Eq.(\ref{eq:topk_sum}) to form an equivalent convex relaxation. And the following lemma also provides us with a solution for the average top-$k$ optimization:

\begin{restatable}[\cite{DBLP:journals/ipl/OgryczakT03}]{lemma}{equivalence}
For $\lambda \in [0, 1], f_i(\boldsymbol{x}) \in [0, 1], i = 1, \dots c$, we have 
\begin{equation*}
    \sum_{i=1}^{k}f_{[i]}(\boldsymbol{x}) = \min_{\lambda \in [0, 1]}\{ k \lambda + \sum_{i=1}^{c} [f_i(\boldsymbol{x}) - \lambda]_+\},
\end{equation*}
where $[a]_+ = \max\{0, a\}$, and $f_{[k]}(\boldsymbol{x})$ is one optimal solution.
\label{lem:equivalence}
\end{restatable}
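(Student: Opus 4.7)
The plan is to recognize $g(\lambda) := k\lambda + \sum_{i=1}^c [f_i(\boldsymbol{x}) - \lambda]_+$ as a convex, piecewise-linear function of $\lambda$, locate the interval on which it attains its global minimum, and then verify that evaluating $g$ at $\lambda = f_{[k]}(\boldsymbol{x})$ yields $\sum_{i=1}^k f_{[i]}(\boldsymbol{x})$.

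First I would observe that each summand $[f_i(\boldsymbol{x}) - \lambda]_+$ is convex and piecewise linear in $\lambda$ with a single kink at $\lambda = f_i(\boldsymbol{x})$, so $g$ is convex with breakpoints exactly at the $c$ values $f_1(\boldsymbol{x}), \dots, f_c(\boldsymbol{x})$. Between breakpoints, the right derivative of $g$ equals $k - |\{i : f_i(\boldsymbol{x}) > \lambda\}|$. Invoking the strict-ordering assumption $f_{[1]}(\boldsymbol{x}) > \cdots > f_{[c]}(\boldsymbol{x})$ from Section~3.1, this derivative is strictly negative for $\lambda < f_{[k+1]}(\boldsymbol{x})$, equals $0$ on $(f_{[k+1]}(\boldsymbol{x}), f_{[k]}(\boldsymbol{x}))$, and is strictly positive for $\lambda > f_{[k]}(\boldsymbol{x})$. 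Hence $g$ attains its unconstrained minimum on the closed interval $[f_{[k+1]}(\boldsymbol{x}), f_{[k]}(\boldsymbol{x})]$; in particular $\lambda^{\star} = f_{[k]}(\boldsymbol{x})$ is one optimizer.

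Next I would substitute $\lambda^{\star}$ back into $g$. Since $f_{[i]}(\boldsymbol{x}) - f_{[k]}(\boldsymbol{x}) \geq 0$ for $i \leq k$ and $\leq 0$ for $i > k$, only the first $k$ terms survive in the positive-part sum, and telescoping gives
\begin{equation*}
g(f_{[k]}(\boldsymbol{x})) = k\,f_{[k]}(\boldsymbol{x}) + \sum_{i=1}^{k}\bigl(f_{[i]}(\boldsymbol{x}) - f_{[k]}(\boldsymbol{x})\bigr) = \sum_{i=1}^{k} f_{[i]}(\boldsymbol{x}),
\end{equation*}
matching the target right-hand side.

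Finally, I would address feasibility with respect to $\lambda \in [0,1]$: since $f_i(\boldsymbol{x}) \in [0,1]$ forces $f_{[k]}(\boldsymbol{x}) \in [0,1]$, the unconstrained minimizer lies inside the feasible interval, so the box constraint never binds and the constrained and unconstrained minima coincide. The only delicate point is the non-differentiability at the breakpoints, but because $g$ is convex and piecewise linear this reduces to a routine sub-gradient bookkeeping: the optimality condition $0 \in \partial g(\lambda^{\star})$ is witnessed by the sign change of the one-sided derivatives computed above. I do not anticipate any real obstacle; the substantive content is the counting identity for the right derivative, and the rest is a one-line telescoping.
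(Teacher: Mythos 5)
Your argument is correct, but it takes a genuinely different route from the paper. The paper proves the identity via linear programming duality: it writes $\sum_{i=1}^{k}f_{[i]}(\boldsymbol{x})$ as the optimal value of the LP $\max_{\boldsymbol{p}}\boldsymbol{p}^{\top}F$ subject to $\boldsymbol{p}^{\top}\boldsymbol{1}=k$, $0\le p_i\le 1$, forms the Lagrangian, and reads the claimed variational formula off the dual, with $\lambda$ appearing as the multiplier of the equality constraint. You instead treat $g(\lambda)=k\lambda+\sum_{i=1}^{c}[f_i(\boldsymbol{x})-\lambda]_+$ directly as a one-dimensional convex piecewise-linear function, count that its right derivative equals $k-|\{i:f_i(\boldsymbol{x})>\lambda\}|$, locate the sign change, and verify the value at $\lambda=f_{[k]}(\boldsymbol{x})$ by direct cancellation (calling it telescoping is a slight misnomer, but the computation is right). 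Both are sound; the duality route explains where the variational form comes from and matches the cited Ogryczak--Tamir derivation, while your elementary route is self-contained, identifies the full minimizer set $[f_{[k+1]}(\boldsymbol{x}),f_{[k]}(\boldsymbol{x})]$ rather than just one optimizer, and handles the box constraint $\lambda\in[0,1]$ more carefully than the paper's brief remark that $\boldsymbol{u}\ge F-\lambda\boldsymbol{1}\ge\boldsymbol{0}$ forces $\lambda\in[0,1]$. One cosmetic point: the identity does not actually need the strict-ordering (tie-breaking) assumption you invoke; with ties the middle interval may degenerate to a point, but the subgradient condition $0\in\partial g(f_{[k]}(\boldsymbol{x}))$ still holds, exactly as your closing remark anticipates.
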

Thus by adopting Lem.\ref{lem:equivalence} to Eq.(\ref{eq:topk_sum}), a convex optimization framework could be established. We will show this specific relaxation process of our objective below. 

\textbf{Relaxation of $c_1$.} For the sake of explanation, we first make the definitions of $\Delta_i$ and $\Delta_{[i]}$, which measure the difference between the maximum score of all specified labels and the $i$-th/bottom-$i$ score of all labels, respectively, \textit{i.e.},
\begin{equation}
\begin{aligned}
    \Delta_i = & \left [ \max_{s \in \mathcal{S}} f_s(\boldsymbol{x} + \boldsymbol{\epsilon}) - f_{i}(\boldsymbol{x} + \boldsymbol{\epsilon}) \right ]_+, \\
    \Delta_{[i]} = & \left [ \max_{s \in \mathcal{S}} f_s(\boldsymbol{x} + \boldsymbol{\epsilon}) - f_{[c - i + 1]}(\boldsymbol{x} + \boldsymbol{\epsilon}) \right ]_+.
\end{aligned}
\end{equation}
When $i = 1$, $\Delta_{[1]}$ is the largest difference since $f_{[c]}(\boldsymbol{x} + \boldsymbol{\epsilon})$ is the smallest value among $c$ scores. It could be known that as $i$ increases, the value of $f_{[c - i + 1]}(\boldsymbol{x} + \boldsymbol{\epsilon})$ increases consistently and the difference $\Delta_{[i]}$ reduces. Thus, $\Delta_{[i]}$ refers to the $i$-th largest element among $\Delta_i$. Then, according to the fact 
\begin{equation*}
    \frac{1}{c - k} \sum_{i=1}^{c - k}\Delta_{[i]} \geq \Delta_{[c - k]},
\end{equation*}
and Lem.\ref{lem:equivalence}, we have 
\begin{equation*}
    \min_{\lambda_1 \in [0, 1]} \lambda_1 + \frac{1}{c - k} \sum_{i=1}^{c}\left [ \Delta_i - \lambda_1 \right ]_+.
\end{equation*}
Furthermore, we could remove the inner redundant hinge function by the following lemma: 
\begin{restatable}[\cite{DBLP:conf/nips/FanLYH17}]{lemma}{hinge}
For $ \forall a > 0, b > 0$, we have $[[a - x]_+ - b]_+ = [a - x - b]_+$.
\label{lem:hinge}
\end{restatable}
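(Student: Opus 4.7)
The plan is to prove this identity by a direct case analysis on the sign of $a - x$, since the hinge function $[\cdot]_+$ is piecewise linear with a single breakpoint at zero. The nested hinge $[[a-x]_+ - b]_+$ behaves differently depending on whether the inner hinge is active or saturated, and I expect the two sides to coincide on each piece with only minor bookkeeping.

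First, I would fix $a, b > 0$ and split into two cases according to whether $a - x \geq 0$ or $a - x < 0$. In the first case, the inner hinge is active, so $[a-x]_+ = a - x$, and the left-hand side collapses to $[(a - x) - b]_+ = [a - x - b]_+$, matching the right-hand side immediately. This case does not require the positivity of $b$.

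In the complementary case $a - x < 0$, the inner hinge saturates, giving $[a - x]_+ = 0$, so the left-hand side becomes $[0 - b]_+ = [-b]_+$. Here I would invoke the hypothesis $b > 0$ to conclude $[-b]_+ = 0$. For the right-hand side, since $a - x < 0$ and $-b < 0$, their sum $a - x - b$ is strictly negative, hence $[a - x - b]_+ = 0$ as well. Both sides therefore equal zero, completing the case analysis.

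The main subtlety — really the only thing to watch — is ensuring that the strict positivity of $b$ is invoked in the saturated case: without $b > 0$, the equality $[-b]_+ = 0$ could fail and break the identity. The assumption $a > 0$ plays no role in the argument itself and appears to be stated only to match the context (where $a$ arises as a maximum of nonnegative prediction scores) in which the lemma is applied to remove the inner hinge in the relaxation of $c_1$.
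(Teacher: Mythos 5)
Your case analysis on the sign of $a-x$ is exactly the argument the paper gives: the inner hinge is either active (in which case the identity is immediate) or saturated (in which case both sides vanish because $b>0$ forces $[-b]_+=0$ and $a-x-b<0$). Your observation that only the nonnegativity of $b$ is actually used is also consistent with the paper's proof, which explicitly carries out the argument for all $a\geq 0$, $b\geq 0$.
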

Then, the second term in Eq.(\ref{eq:targeted-surrogate}) could be finally transformed into
\begin{equation*}
    \min_{\lambda_1 \in [0, 1]} \lambda_1 + \frac{1}{c - k} \sum_{i=1}^{c}\left [ \max_{s \in \mathcal{S}} f_s(\boldsymbol{x} + \boldsymbol{\epsilon}) - f_{i}(\boldsymbol{x} + \boldsymbol{\epsilon}) - \lambda_1 \right ]_+.
\end{equation*}

\textbf{Relaxation of $c_2$.} Similarly, we could make the definitions of $\tilde{\Delta}_j$ and $\tilde{\Delta}_{[j]}$ to convert the third term of Eq.(\ref{eq:targeted-surrogate}):
\begin{equation}
\begin{aligned}
    \tilde{\Delta}_j = & \left [ f_j(\boldsymbol{x} + \boldsymbol{\epsilon}) - \min_{y \in Y_p \setminus \mathcal{S}} f_y(\boldsymbol{x} + \boldsymbol{\epsilon}) \right ]_+, \\
    \tilde{\Delta}_{[j]} = & \left [ f_{[j]}(\boldsymbol{x} + \boldsymbol{\epsilon}) - \min_{y \in Y_p \setminus \mathcal{S}} f_y(\boldsymbol{x} + \boldsymbol{\epsilon}) \right ]_+,
\end{aligned}
\end{equation}
where $\tilde{\Delta}_{[j]}$ is the $j$-th largest element among $\tilde{\Delta}_j$. Correspondingly, its final optimization could be written as:
\begin{equation*}
    \min_{\lambda_2 \in [0, 1]} \lambda_2 + \frac{1}{k} \sum_{j=1}^{c}\left [ f_{j}(\boldsymbol{x} + \boldsymbol{\epsilon}) - \min_{y \in Y_p \setminus \mathcal{S}}f_y(\boldsymbol{x} + \boldsymbol{\epsilon}) - \lambda_2 \right ]_+.
\end{equation*}
Combining the above two objectives, we obtain our final objective. The final objective is shown as follows:

\begin{equation}
\begin{aligned}
    \min_{\lambda_1, \lambda_2 \in [0, 1], \boldsymbol{\epsilon}} & \lambda_1 + \lambda_2 + \frac{\alpha}{2} \left \| \boldsymbol{\epsilon} \right \|^2_2\\
    & + \frac{1}{c - k} \sum_{i=1}^{c}\left [ \max_{s \in \mathcal{S}} f_s(\boldsymbol{x} + \boldsymbol{\epsilon}) - f_{i}(\boldsymbol{x} + \boldsymbol{\epsilon}) - \lambda_1 \right ]_+ \\
    & + \frac{1}{k}\sum_{j=1}^{c} \left [ f_{j}(\boldsymbol{x} + \boldsymbol{\epsilon}) - \min_{y \in Y_p \setminus \mathcal{S}}f_y(\boldsymbol{x} + \boldsymbol{\epsilon}) - \lambda_2 \right ]_+.
    \label{eq:targeted-attack-final}
\end{aligned}
\end{equation}


In this way, our original single-element optimization problem is modified into a form of average top-$k$ optimization. This objective does not depend on any sorted result but just needs to calculate the sum result, which greatly reduces the complexity of optimizing perturbations. To solve this optimization problem, we could utilize the iterative gradient descent method \cite{DBLP:conf/iccv/HuK0L21, DBLP:conf/nips/HuY0L20} to directly update $\boldsymbol{\epsilon}$ and $\lambda_i$ with the following \textbf{Algorithm 1}. For this algorithm, we would adopt two different selection schemes to select specified categories:
\begin{itemize}
    \item \textbf{S1: Global Selection}. In this case, we globally specify several categories for all the samples. Note that only the samples relevant to at least one specific category are attacked.
    \item \textbf{S2: Random Selection}. In this case, we randomly specify $m$ ground-truth labels for each example. For any integer $m$, it satisfies that $0 < m \leq k$.
\end{itemize}

\begin{algorithm}[htbp]
\DontPrintSemicolon
  \SetAlgoLined
  \KwIn {example $\boldsymbol{x}$, predictor $F$, learning rate $\eta$, maximum iteration $L$,  specified label set $\mathcal{S}$}
  \KwOut {Optimal perturbation $\boldsymbol{\epsilon}^{l+1}$, adversarial example $\boldsymbol{x}^*$}
  \textbf{Initialization}: $\boldsymbol{\epsilon}^0 = 0$, $\boldsymbol{x}^* = \boldsymbol{x}$, $l = 0$\;
  \While{$l < L$}{
    Calculate the gradients of Eq.(\ref{eq:targeted-attack-final}): $\nabla \lambda^{l}_1$, $\nabla \lambda^{l}_2$ and $\nabla \boldsymbol{\epsilon}^{l}$  \\
    $\lambda^{l+1}_1 = \lambda^{l}_1 - \eta \nabla \lambda^{l}_1, \quad \lambda^{l+1}_2 = \lambda^{l}_2 - \eta \nabla \lambda^{l}_2$ \\
    Update $\boldsymbol{x}^* = \boldsymbol{x}^* + \boldsymbol{\epsilon}^{l+1}$ where $\boldsymbol{\epsilon}^{l+1} = \boldsymbol{\epsilon}^{l} - \eta \nabla \boldsymbol{\epsilon}^{l}$ \\
    \If {\textup{conditions of Eq.(\ref{eq:targeted}) hold}}{
        break\;
}
}
  \Return{$\boldsymbol{x}^*, \boldsymbol{\epsilon}^{l+1}$}
  \caption{Top-$k$ Measure Imperceptible Attack ($\textup{T}_k$MIA)}
\end{algorithm}

\begin{table*}[htbp]
  \centering
  \caption{The average performance with the maximum iteration 300 under different $k$ values and \underline{globally} selected $\mathcal{S}$ on \underline{COCO}, where $\Delta$ refers to the difference between the original value and the perturbed value of corresponding metrics. $\downarrow$ means the smaller the value the better, and $\uparrow$ is the opposite. The best results under each set of parameters are bolded.}
  \renewcommand\arraystretch{0.85}
  \newcommand{\tabincell}[2]{\begin{tabular}{@{}#1@{}}#2\end{tabular}}
  \setlength{\tabcolsep}{3.5mm}{
    \begin{tabular}{c|cc|cccccc}
    \toprule
    Type & $k$   & Methods & $\Delta \textup{T}_k \textup{Acc} \downarrow$ & $\Delta \textup{P}@k \downarrow$ & $\Delta \textup{mAP}@k \downarrow$ & $\Delta \textup{NDCG}@k \downarrow$ & $\Delta l \uparrow$ & $\textup{APer} \downarrow$ \\
    \midrule
    \midrule
    \multirow{12}[4]{*}{Person} & \multirow{4}[1]{*}{3} &ML-CW-U & 0.6400  & 0.4607  & 0.6064  & 0.5409  & 0.9960  & 1.9235  \\
          &       & $k$Fool & 0.4820  & 0.2643  & 0.2909  & 0.2297  & 0.6540  & 3.0603  \\
          &       & T$_{k}$ML-AP-U & 0.6290  & 0.4466  & 0.5860  & 0.5213  & 1.0000  & 1.5846  \\
          &       & $\textup{T}_k$MIA(Ours) & \textbf{0.1880} & \textbf{0.1073} & \textbf{0.1320} & \textbf{0.1081} & \textbf{1.0000} & \textbf{1.2314} \\
\cmidrule{2-9}          & \multirow{4}[1]{*}{5} &ML-CW-U & 0.3840  & 0.5288  & 0.6872  & 0.6459  & 0.9930  & 2.4513  \\
          &       & $k$Fool & 0.2920  & 0.2514  & 0.2989  & 0.2405  & 0.6570  & 12.5931  \\
          &       & T$_{k}$ML-AP-U & 0.3710  & 0.4904  & 0.6573  & 0.6105  & 0.9980  & 2.0422  \\
          &       & $\textup{T}_k$MIA(Ours) & \textbf{0.0670} & \textbf{0.0844} & \textbf{0.1161} & \textbf{0.0909} & \textbf{1.0000} & \textbf{1.6404} \\
\cmidrule{2-9}          & \multirow{4}[1]{*}{10} &ML-CW-U & 0.1098  & 0.5366  & 0.7097  & 0.6716  & 0.9756  & 2.7744  \\
          &       & $k$Fool & 0.0732  & 0.1927  & 0.2501  & 0.1934  & 0.5976  & 4.8946  \\
          &       & T$_{k}$ML-AP-U & 0.1097  & 0.4646  & 0.6699  & 0.6137  & 0.9878  & 2.3951  \\
          &       & $\textup{T}_k$MIA(Ours) & \textbf{0.0000} & \textbf{0.0707} & \textbf{0.0933} & \textbf{0.0627} & \textbf{1.0000} & \textbf{1.4923} \\
    \bottomrule
    \end{tabular}}%
  \label{tab:coco_global_maxiter300}%
\end{table*}%

\begin{table*}[htbp]
  \centering
  \caption{The average performance with the maximum iteration 300 under \underline{globally} selected $\mathcal{S}$ on \underline{NUS}.}
  \renewcommand\arraystretch{0.85}
  \setlength{\tabcolsep}{3.5mm}{
    \begin{tabular}{c|cc|cccccc}
    \toprule
    Type & $k$   & Methods & $\Delta \textup{T}_k \textup{Acc} \downarrow$ & $\Delta \textup{P}@k \downarrow$ & $\Delta \textup{mAP}@k \downarrow$ & $\Delta \textup{NDCG}@k \downarrow$ & $\Delta l \uparrow$ & $\textup{APer} \downarrow$ \\
    \midrule
    \midrule
    \multirow{12}[4]{*}{Buildings} & \multirow{4}[1]{*}{2} &ML-CW-U & 0.4410  & 0.2725  & 0.3215  & 0.2787  & 0.9820  & 1.2455  \\
          &       & $k$Fool & 0.5350  & 0.4210  & 0.4455  & 0.4174  & 1.0030  & 14.2177  \\
          &       & T$_{k}$ML-AP-U & 0.4290  & 0.2650  & 0.3162  & 0.2742  & 0.9710  & 1.0741  \\
          &       & $\textup{T}_k$MIA(Ours) & \textbf{0.2500} & \textbf{0.1630} & \textbf{0.1745} & \textbf{0.1537} & \textbf{1.0040} & \textbf{0.7156} \\
\cmidrule{2-9}          & \multirow{4}[1]{*}{3} &ML-CW-U & 0.5060  & 0.2703  & 0.3482  & 0.2896  & 0.9440  & 1.6943  \\
          &       & $k$Fool & 0.6300  & 0.4890  & 0.5315  & 0.4829  & \textbf{1.0160} & 23.6683  \\
          &       & T$_{k}$ML-AP-U & 0.5200  & 0.2870  & 0.3720  & 0.3116  & 0.9480  & 1.5096  \\
          &       & $\textup{T}_k$MIA(Ours) & \textbf{0.2830} & \textbf{0.1373} & \textbf{0.1483} & \textbf{0.1172} & 1.0120  & \textbf{0.8673} \\
\cmidrule{2-9}          & \multirow{4}[1]{*}{5} &ML-CW-U & 0.3383  & 0.2419  & 0.3391  & 0.2859  & 0.7626  & 2.1203  \\
          &       & $k$Fool & 0.4131  & 0.4441  & 0.5027  & 0.4414  & \textbf{1.0579} & 38.9787  \\
          &       & T$_{k}$ML-AP-U & 0.3530  & 0.2665  & 0.3840  & 0.3283  & 0.7943  & 1.9588  \\
          &       & $\textup{T}_k$MIA(Ours) & \textbf{0.2224} & \textbf{0.1237} & \textbf{0.1315} & \textbf{0.0975} & 1.0318  & \textbf{1.2240} \\
    \bottomrule
    \end{tabular}}%
  \label{tab:nus_global_maxiter300}%
\end{table*}%

\section{Experiments}
\label{sec:exp}

In this section, we present the experimental settings and evaluate the quantitative results of our method against other comparison methods on three large-scale benchmark datasets and two attack schemes.

\subsection{Experimental Settings}
\label{subsec:exp_setting}
\textbf{Datasets.} We carry out our experiments on three well-known benchmark multi-label image annotation datasets containing PASCAL VOC 2012 (VOC) \cite{DBLP:journals/ijcv/EveringhamEGWWZ15}, MS COCO 2014 (COCO) \cite{DBLP:conf/eccv/LinMBHPRDZ14} and NUS WIDE (NUS) \cite{DBLP:conf/civr/ChuaTHLLZ09}. Considering space constraints, we present the information about these datasets in Appendix \ref{appsubsec:datasets}

\textbf{Evaluation Metrics.} For each individual experiment, we would report Multi-label Top-$k$ accuracy (T$_k$Acc) and three popular ranking-based measures for a more comprehensive comparison, including Precision at $k$ (P@$k$), mean Average Precision at $k$ (mAP@$k$) and Normalized Discounted Cumulative Gain at $k$ (NDCG@$k$). 

With a bit of abuse of the notation, we use $\pi$ to denote the indices of permutations of $F(\boldsymbol{x})$. For the instance $\boldsymbol{z} = (\boldsymbol{x}, \boldsymbol{y})$ and model $F$, $\pi_{k}(F, \boldsymbol{z}) = \left \{ [1], \dots, [k] \right \}$ denotes the top-$k$ classes \textit{w.r.t} input $\boldsymbol{x}$. Hence, the Multi-label Top-$k$ Accuracy  \cite{DBLP:conf/nips/HuY0L20} is defined as follows:

\begin{equation}
    \textup{T}_k \textup{Acc}(F, \boldsymbol{z}) = \mathbb{I}_{[Y_p \subseteq \pi_k(F, \boldsymbol{z})]},
\end{equation}
where $\mathbb{I}_{[\cdot]}$ is the indicator function. Then, we make the following definition of Precision at $k$ \cite{DBLP:conf/ijcai/LiuDLZ16}:
\begin{equation}
    \textup{P}@k(f, \boldsymbol{y}) = \frac{1}{k} \sum_{i=1}^{k} y_{[i]}.
\end{equation}
As an extended concept, Average Precision at $k$ \cite{DBLP:conf/icml/WuZ17,DBLP:conf/iccv/RidnikBZNFPZ21} averages the top-$k$ precision performance at different recall values:
\begin{equation}
    \textup{AP}@k(f, \boldsymbol{y}) = \frac{1}{N_k(\boldsymbol{y})} \sum_{i=1}^{k}y_{[i]} \cdot \textup{P}@k,
\end{equation}
where $N_k(\boldsymbol{y}) = \min \{ k, \left |Y_p  \right | \}$, and mean Average Precision at $k$ \cite{DBLP:conf/cikm/LiGX17, DBLP:journals/corr/abs-2209-13262} measures AP performance on each category:
\begin{equation}
    \textup{mAP}@k(f, \boldsymbol{y}) = \frac{1}{c} \sum_{j=1}^{c}\textup{AP}_j@k(f, \boldsymbol{y}),
\end{equation}
where $\textup{AP}_j@k(f, \boldsymbol{y})$ refers to the AP performance on $j$-th category.

\begin{table*}[htbp]
  \centering
  \caption{The average performance with the maximum iteration 300 and $\delta = 2$ under \underline{randomly} selected $\mathcal{S}$ on \underline{COCO}.}
  \renewcommand\arraystretch{0.85}
  \setlength{\tabcolsep}{4mm}{
    \begin{tabular}{c|cc|cccccc}
    \toprule
    $k$   & $\left | \mathcal{S} \right |$ & Methods & $\Delta \textup{T}_k \textup{Acc} \downarrow$ & $\Delta \textup{P}@k \downarrow$ & $\Delta \textup{mAP}@k \downarrow$ & $\Delta \textup{NDCG}@k \downarrow$ & $\Delta l \uparrow$ & $\textup{APer} \downarrow$ \\
    \midrule
    \midrule
    \multirow{12}[4]{*}{5} & \multirow{4}[1]{*}{2} &ML-CW-U & 0.4900  & 0.5232  & 0.6767  & 0.6177  & 1.9970  & 2.0903  \\
          &       & $k$Fool & 0.4640  & 0.3846  & 0.4694  & 0.3898  & 1.6000  & 59.2396  \\
          &       & T$_{k}$ML-AP-U & 0.4890  & 0.5042  & 0.6638  & 0.6026  & 2.0000  & 1.7409  \\
          &       & $\textup{T}_k$MIA(Ours) & \textbf{0.3680} & \textbf{0.2236} & \textbf{0.2641} & \textbf{0.2207} & \textbf{2.0000} & \textbf{1.0922} \\
\cmidrule{2-9}          & \multirow{4}[1]{*}{3} &ML-CW-U & 0.5780  & 0.3416  & 0.5057  & 0.4158  & 2.0100  & 1.9175  \\
          &       & $k$Fool & 0.5060  & 0.2636  & 0.3320  & 0.2499  & 1.8510  & 30.6267  \\
          &       & T$_{k}$ML-AP-U & 0.5740  & 0.3274  & 0.5018  & 0.4117  & 2.0260  & 1.5879  \\
          &       & $\textup{T}_k$MIA(Ours) & \textbf{0.5020} & \textbf{0.2130} & \textbf{0.2731} & \textbf{0.2091} & \textbf{2.0590} & \textbf{1.2568} \\
\cmidrule{2-9}          & \multirow{4}[1]{*}{5} &ML-CW-U & 0.6505  & 0.1689  & 0.2294  & 0.1605  & 2.0097  & 1.5667  \\
          &       & $k$Fool & 0.5097  & 0.1281  & 0.1539  & 0.0988  & \textbf{2.0291} & 5.7119  \\
          &       & T$_{k}$ML-AP-U & 0.6116  & 0.1543  & 0.2127  & 0.1476  & 2.0097  & 1.3096  \\
          &       & $\textup{T}_k$MIA(Ours) & \textbf{0.4563} & \textbf{0.1146} & \textbf{0.1413} & \textbf{0.0940} & 2.0049  & \textbf{1.0605} \\
    \bottomrule
    \end{tabular}}%
  \label{tab:coco_random_maxiter300_pert2}%
\end{table*}%

\begin{table*}[htbp]
  \centering
  \caption{The average performance with the maximum iteration 300 and $\delta = 2$ under \underline{randomly} selected $\mathcal{S}$ on \underline{NUS}.}
  \renewcommand\arraystretch{0.85}
  \newcommand{\tabincell}[2]{\begin{tabular}{@{}#1@{}}#2\end{tabular}}
  \setlength{\tabcolsep}{4mm}{
    \begin{tabular}{c|cc|cccccc}
    \toprule
    $k$   & $\left | \mathcal{S} \right |$ & Methods & $\Delta \textup{T}_k \textup{Acc} \downarrow$ & $\Delta \textup{P}@k \downarrow$ & $\Delta \textup{mAP}@k \downarrow$ & $\Delta \textup{NDCG}@k \downarrow$ & $\Delta l \uparrow$ & $\textup{APer} \downarrow$ \\
    \midrule
    \midrule
    \multirow{12}[4]{*}{5} & \multirow{4}[1]{*}{2} &ML-CW-U & 0.5240  & 0.3970  & 0.5204  & 0.4519  & 1.3400  & 2.6046  \\
          &       & $k$Fool & 0.5980  & 0.5912  & 0.6585  & 0.6010  & \textbf{1.8900} & 121.4293  \\
          &       & T$_{k}$ML-AP-U & 0.5500  & 0.4854  & 0.6164  & 0.5527  & 1.6300  & 2.2604  \\
          &       & $\textup{T}_k$MIA(Ours) & \textbf{0.3990} & \textbf{0.2334} & \textbf{0.2653} & \textbf{0.2001} & 1.6070  & \textbf{1.9674} \\
\cmidrule{2-9}          & \multirow{4}[1]{*}{3} &ML-CW-U & 0.5564  & 0.2877  & 0.3856  & 0.3087  & 1.5063  & 2.4936  \\
          &       & $k$Fool & 0.6118  & 0.3445  & 0.4190  & 0.3283  & \textbf{2.0178} & 38.8708  \\
          &       & T$_{k}$ML-AP-U & 0.5831  & 0.3359  & 0.4682  & 0.3843  & 1.7280  & \textbf{2.1731} \\
          &       & $\textup{T}_k$MIA(Ours) & \textbf{0.4615} & \textbf{0.2228} & \textbf{0.2594} & \textbf{0.1888} & 1.7602  & 2.1984  \\
\cmidrule{2-9}          & \multirow{4}[1]{*}{5} &ML-CW-U & 0.5833  & 0.1583  & 0.2096  & 0.1413  & 1.7083  & 1.9930  \\
          &       & $k$Fool & \textbf{0.5416} & 0.1416  & 0.1700  & 0.1138  & \textbf{2.2083} & 3.1938  \\
          &       & T$_{k}$ML-AP-U & 0.5833  & 0.1583  & 0.1979  & 0.1361  & 1.7916  & \textbf{1.5910} \\
          &       & $\textup{T}_k$MIA(Ours) & 0.5833  & \textbf{0.1416} & \textbf{0.1541} & \textbf{0.0998} & 1.9583  & 1.6672  \\
    \bottomrule
    \end{tabular}}%
  \label{tab:nus_random_maxiter300_pert2}%
\end{table*}%

Normalized Discounted Cumulative Gain at $k$ (NDCG@$k$) \cite{DBLP:journals/corr/abs-2210-03968} is a kind of listwise ranking measure that allocates decreasing weights to the labels from top to bottom positions, so as to precisely evaluate each ranking result:
\begin{equation}
\begin{aligned}
    \textup{DCG@}k(f, \boldsymbol{y}) & = \sum_{i=1}^{k}\frac{y_{[i]}}{\log_2(i + 1)}, \\
    \textup{IDCG@}k(\boldsymbol{y}) & = \sum_{i=1}^{N_k(\boldsymbol{y})}\frac{1}{\log_2(i + 1)}, \\
    \textup{NDCG@}k(f, \boldsymbol{y}) & = \frac{\textup{DCG@}k(f, \boldsymbol{y})}{\textup{IDCG@}k(\boldsymbol{y})}.
\end{aligned}
\end{equation}

To further measure the overall effectiveness of our perturbation algorithm, it is necessary to evaluate the success rate and size of the perturbation. We now define the average attack effectiveness metric $\Delta l$, which calculates the average change number of top-$k$ specified relevant labels over all images:
\begin{equation}
    \Delta l = \frac{1}{n} \sum_{i=1}^{n}(\left | \mathcal{S} \right | - \left | \mathcal{S}' \right |),
\end{equation}
where $\mathcal{S}'$ is the labels in $\mathcal{S}$ that are still ranked in the top-$k$ region after attacking. And average perturbation (APer), which measures the average successful perturbation among all instances, is presented as: 
\begin{equation}
    \textup{APer} = 
    \frac{1}{n} \sum_{i=1}^{n}\left \| \boldsymbol{\epsilon}_i \right \|_2.
\end{equation}


\textbf{Competitors.} We compare our method with the existing top-$k$ multi-label untargeted attack methods, encompassing T$_{k}$ML-AP-U \cite{DBLP:conf/iccv/HuK0L21}, $k$Fool \cite{DBLP:conf/wacv/TursynbekPO22} and ML-CW-U \cite{DBLP:conf/sp/Carlini017}, since their process of excluding the top-$k$ relevant labels is similar to ours. However, all of them are untargeted attacks and not for the specified categories. For the sake of fairness, we uniformly ask these methods to attack specified categories in the top-$k$ set. Then, we observe the norm of successful perturbation and its metric change. Due to length limitations, we have relegated their details to Appendix \ref{appsubsec:competitors}.

Since these competitors hardly achieve measure imperceptibility, it is unfair to require all of them to meet the strict constraints in Eq.(\ref{eq:targeted}). Thus, we uniformly lower the judging standard of a successful perturbation. Note that when the number of excluded specified categories is no less than threshold $\delta$ after a certain iteration, we term it a successful attack. We set $\delta = \left | \mathcal{S} \right |$ for \textbf{S1}, and ask this integer no larger than $\left | \mathcal{S} \right |$ for \textbf{S2}, respectively. In this case, we compare each baseline with our method by observing the value change of measures and average perturbation norm.

\begin{figure*}[t]
  \centering
  \vspace{-0.3cm}
  \includegraphics[width=0.95\linewidth]{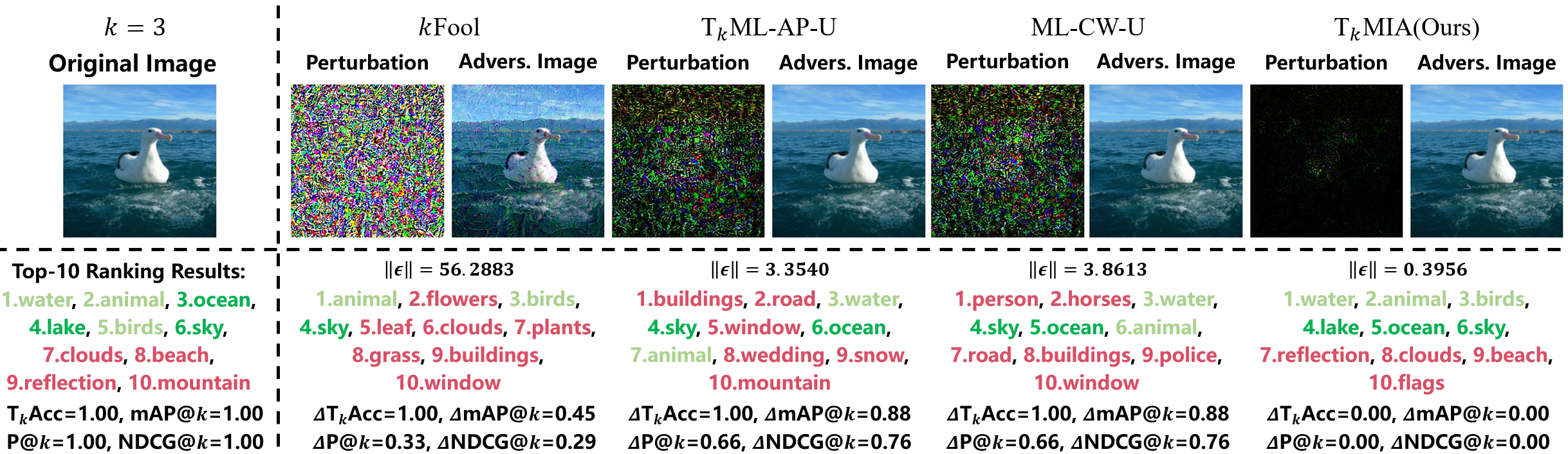}
\caption{The top-3 successful performance comparisons under \textit{global selection scheme} on NUS, where the maximum iteration is 300. All perturbation intensities are magnified by a factor of 100 to enhance contrast and visibility. The specified labels, relevant labels, and irrelevant labels are marked with \textcolor[rgb]{ 0,  .69,  .314}{dark green}, \textcolor[rgb]{ .573,  .816,  .314}{light green}, and \textcolor[rgb]{ .847,  .31,  .4}{red}, respectively.}
\label{fig:nus_global_example}
\end{figure*}

\begin{figure*}[t]
  \centering
  \vspace{-0.3cm}
  \includegraphics[width=0.95\linewidth]{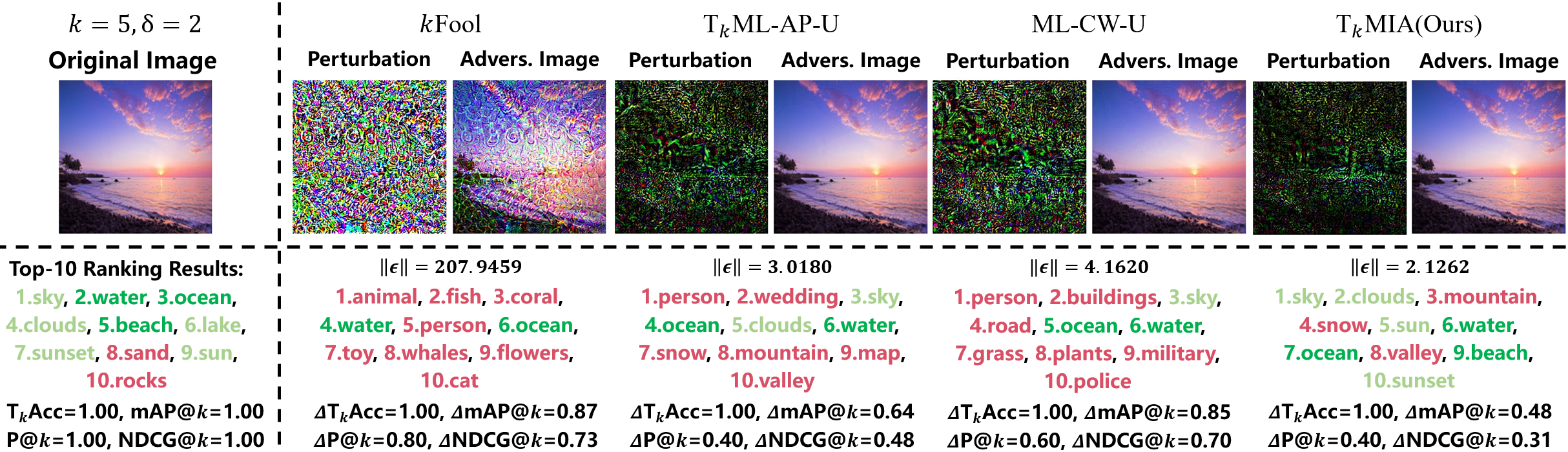}
\caption{The top-5 successful performance comparisons under \textit{random selection scheme} on NUS, where the maximum iteration is 300. All perturbation intensities are magnified by a factor of 100 to enhance contrast and visibility. The specified labels, relevant labels, and irrelevant labels are marked with \textcolor[rgb]{ 0,  .69,  .314}{dark green}, \textcolor[rgb]{ .573,  .816,  .314}{light green}, and \textcolor[rgb]{ .847,  .31,  .4}{red}, respectively.}
\label{fig:nus_random_example}
\vspace{-0.3cm}
\end{figure*}

\textbf{Implementation Details.} Our experiments are implemented by PyTorch \cite{DBLP:conf/nips/PaszkeGMLBCKLGA19}, running on NVIDIA TITAN RTX with CUDA v12.0. We select ResNet-50 \cite{DBLP:conf/cvpr/HeZRS16} as the backbone on VOC, and ResNet-101 \cite{DBLP:conf/cvpr/HeZRS16} on COCO and NUS, respectively. All parameters are initialized with the checkpoint pre-trained on ImageNet, except for the randomly initialized fully connected layer. To satisfy the multi-label classification tasks, we add the sigmoid function to keep the output results ranging from [0, 1]. All pixel intensities of each RGB image range from $\{ 0, 1, \dots, 255\}$. More details about the models we used on each dataset are presented in Appendix \ref{appsubsec:model}.

At the attack period, all images are normalized into the range of $[-1, 1]$. We adopt SGD \cite{DBLP:conf/icml/SutskeverMDH13} as our optimizer, where both learning rate and hyper-parameter $\alpha$ are searched within the range of \{1e-3, 5e-4, 1e-4, 5e-5\}, and momentum adopts 0.9. All of the victim models are well-trained so as to promise the effectiveness of our attack under various specified categories. On top of this, we only sample the instance that satisfies $\left |Y_p  \right | \geq k + \left | \mathcal{S} \right |$ from the validation set as the attack objects. The maximum number of test images under each set of parameters is set as 1000, and the maximum iteration ranges from \{100, 300, 500\}. 

\vspace{-2mm}
\subsection{Results Analysis}
\label{subsec:results}

The average performance of $\textup{T}_k$MIA (1) under the global selection scheme (2) on COCO and NUS are shown in Table \ref{tab:coco_global_maxiter300} and Table \ref{tab:nus_global_maxiter300}, respectively, where the categories included in each type are specifically illustrated in Appendix \ref{appsubsec:classes}. In most cases, $\textup{T}_k$MIA outperforms other comparison methods when $k$ and $\left | \mathcal{S} \right |$ take different values. It could achieve a smaller change in each metric while perturbing more specified categories in $\mathcal{S}$. Although $k$Fool attacks more classes in some cases, it comes at a larger cost to generate the perturbation. Furthermore, Figure \ref{fig:nus_global_example} clearly shows the visual effect of our attack method and other comparison methods in the global selection scheme on NUS. All competitors could manage to remove the specified class from the top-$k$ prediction. However, they raise the scores of several irrelevant categories, making them appear in the top-$k$ predictions, resulting in the performance degradation of measures. Our perturbation considers increasing the scores of other relevant categories in addition to perturbing these specified labels to achieve the effect of confusing the metrics. On the other hand, our method achieves this goal with a smaller perturbation which could hardly be discerned. These results and analysis all demonstrate that T$_k$MIA could achieve both visual and measure imperceptibility in the global selection scheme.

Table \ref{tab:coco_random_maxiter300_pert2} and Table \ref{tab:nus_random_maxiter300_pert2} present the comparative average performance of different methods under different parameters (1) in the random selection scheme (2) on COCO and NUS, respectively. For the measure performance, our perturbation performs better than other baselines. However, an interesting phenomenon is that $k$Fool perturbs more labels while paying a huge effort, and T$_k$ML-AP-U generates a relatively smaller perturbation but only obtains the sub-optimal metric change results. The proposed T$_k$MIA makes a trade-off between visual imperceptibility and measure imperceptibility and achieves a better overall performance on $\Delta l$ and APer. The visualization results of an example under the random selection scheme on NUS are shown in Figure \ref{fig:nus_random_example}. Comparing these value changes in metrics, we can see that T$_k$MIA always promises a smaller change so as to make its perturbation not obvious. Even if there are still some irrelevant labels in the top-$k$ set, our method also tries to prevent them from being in the front position. This means the proposed perturbation is more favorable to maintaining the measure imperceptibility. Other experimental results and analyses on VOC, COCO, and NUS are presented in Appendix \ref{appsec:add_results}.

\section{Conclusion}
\label{sec:conclusion}

In this paper, we propose a method named T$_k$MIA to generate a novel adversarial perturbation toward top-$k$ multi-label classification models, which could satisfy both visual imperceptibility and measure imperceptibility to better evade the monitoring of defenders. To achieve this goal, we form a simple and convex objective with a corresponding gradient iterative algorithm to effectively optimize this perturbation. Finally, a series of empirical studies on different large-scale benchmark datasets and schemes are carried out. Extensive experimental results and analysis validate the effectiveness of the proposed method.


\begin{acks}
This work was supported in part by the National Key R\&D Program of China under Grant 2018AAA0102000, in part by National Natural Science Foundation of China: 62236008, U21B2038, 61931008, 6212200758 and 61976202, in part by the Fundamental Research Funds for the Central Universities,  in part by Youth Innovation Promotion Association CAS, in part by the Strategic Priority Research Program of Chinese Academy of Sciences, Grant No. XDB28000000.
\end{acks}

\vfill\eject
\newpage
\clearpage
\bibliographystyle{ACM-Reference-Format}
\balance
\bibliography{sample-base}

\newpage
\clearpage
\appendix

\section{Proofs}
\label{appsec:proof}

\subsection{Proof of Lemma \ref{lem:equivalence}}
\equivalence*
\begin{proof}
We first define $\boldsymbol{p}$ as a coefficient vector with size $c$, whose component is denoted as $p_i$. It is noted that $\sum_{i=1}^{k}f_{[i]}(\boldsymbol{x})$ is the solution of the following programming problem:
\begin{equation}
    \max_{\boldsymbol{p}} \boldsymbol{p}^{\top}F, \quad s.t.  \boldsymbol{p}^{\top}\textbf{1} = k, \quad 0 \leq p_i \leq 1, \quad i = 1, \dots, c.
    \label{eq:lem1_origin}
\end{equation}
To solve this problem, we could utilize the Lagrangian equation, converting Eq.(\ref{eq:lem1_origin}) to
\begin{equation*}
    L(\boldsymbol{p}, \boldsymbol{u}, \boldsymbol{v}, \lambda) = - \boldsymbol{p}^{\top}F - \boldsymbol{u}^{\top}\boldsymbol{p} + \boldsymbol{v}^{\top}(\boldsymbol{p} - \boldsymbol{1}) + \lambda(\boldsymbol{p}^{\top}\textbf{1} - k),
    \label{eq:lem1_Lag}
\end{equation*}
where non-negative vectors $\boldsymbol{u}$, $\boldsymbol{v}$, and $\lambda \in \mathbb{R}$ are Lagrangian multipliers. Let us take its derivative \textit{w.r.t} $\boldsymbol{p}$ and set it to be $\boldsymbol{0}$, we obtain $\boldsymbol{v} = \boldsymbol{u} - F + \lambda \boldsymbol{1}$. According to this equation, we further get the dual problem of the original problem by substituting it into Eq.(\ref{eq:lem1_origin}):

\begin{equation*}
    \min_{\boldsymbol{u}, \lambda}\boldsymbol{u}^{\top}\boldsymbol{1} + k \lambda, \quad s.t. \boldsymbol{u} \geq \boldsymbol{0}, \boldsymbol{u} - F + \lambda \boldsymbol{1} \geq \boldsymbol{0}.
    \label{eq:lem1_medium}
\end{equation*}
Due to $\boldsymbol{u} \geq F - \lambda \boldsymbol{1} \geq \boldsymbol{0}$, it is easy to know $\lambda \in [0, 1]$, and we thus obtain the final equivalent form:
\begin{equation}
    \sum_{i=1}^{k}f_{[i]}(\boldsymbol{x}) = \min_{\lambda \in [0, 1]}\left \{ k \lambda + \sum_{i=1}^{c} [f_i(\boldsymbol{x}) - \lambda]_+ \right \}.
    \label{eq:lem1_final}
\end{equation}
Through Eq.(\ref{eq:lem1_final}), we observe $f_{[k]}(\boldsymbol{x})$ is always an optimal solution, \textit{i.e.}
\begin{equation}
    f_{[k]}(\boldsymbol{x}) \in \min_{\lambda \in [0, 1]}\left \{ k \lambda + \sum_{i=1}^{c} [f_i(\boldsymbol{x}) - \lambda]_+ \right \}.
    \label{eq:lem1_solution}
\end{equation}
\end{proof}

\subsection{Proof of Lemma \ref{lem:hinge}}
\hinge*
\begin{proof}
Denote $g(x) = [[a - x]_+ - b]_+$. For any $a \geq 0, b \geq 0$, we have $g(x) = 0 = [a - b - x]_+$ when $a \leq x$. In the case of $a > x$, there holds $g(x) = [a - b - x]_+$. Thus $g(x) = [[a - x]_+ - b]_+ = [a - b - x]_+$ holds for any $a \geq 0, b \geq 0$.
\end{proof}

\section{More Experimental Details}
\label{appsec:add_details}

\begin{table}[htbp]
  \centering
  \caption{Summary of well-trained models on different datasets.}
  \renewcommand\arraystretch{1.1}
    \begin{tabular}{c|ccc}
    \toprule
    Dataset & VOC 2012 & COCO 2014 & NUS WIDE \\
    \midrule
    \midrule
    Backbone & ResNet-50 & ResNet-101 & ResNet-101 \\
    Input size & 300   & 448   & 224 \\
    Optimizer & SGD   & Adam  & Adam \\
    Batch size & 32    & 128   & 64 \\
    Learning rate & 1e-4  & 1e-4  & 1e-4 \\
    Weight decay & 1e-4  & 1e-4  & 1e-4 \\
    \midrule
    mAP   & 0.949 & 0.906 & 0.828 \\
    \bottomrule
    \end{tabular}%
  \label{tab:modelinfo}%
\end{table}%

\subsection{Datasets Information}
\label{appsubsec:datasets}
We carry out our experiments on three well-known benchmark multi-label image annotation datasets:
\begin{itemize}
    \item \textbf{PASCAL VOC 2012} (VOC) \cite{DBLP:journals/ijcv/EveringhamEGWWZ15} is a widely used dataset for evaluating the performance of multi-label models toward computer vision tasks. It consists of 10K images with 20 different categories. The dataset is divided into a training set with 5,011 images and a validation set with 4,952 images, respectively. Each image within the training set contains on average 1.43 relevant labels but no more than 6.
    \item \textbf{MS COCO 2014} (COCO) \cite{DBLP:conf/eccv/LinMBHPRDZ14} is another multimedia large-scale dataset that contains 122,218 annotation images with 80 object categories and the corresponding descriptions. These images are divided into two parts, where 82,081 images are for training and 40,137 images are for validation. Each image within the training set contains on average 2.9 labels and a maximum of 13 labels.
    \item \textbf{NUS WIDE} (NUS) \cite{DBLP:conf/civr/ChuaTHLLZ09} is a common dataset consisting of 269,648 real-world web images, where some images are obtained from multimedia data. It covers 81 dedicated categories and each image contains on average 2.4 associated labels. We exclude the images without any labels and randomly divided them into two parts. Specifically, there are 103,990 images in the training set and 69,706 images in the test set. 
\end{itemize}

\subsection{Competitors Introductions}
\label{appsubsec:competitors}
The details about our comparison methods are shown as follows:
\begin{itemize}
    \item T$_{k}$ML-AP-U \cite{DBLP:conf/iccv/HuK0L21}. This is the first top-$k$ untargeted attack method for producing multi-label adversarial perturbation, which is the main baseline to compare with our method. 
    \item $k$Fool \cite{DBLP:conf/wacv/TursynbekPO22}. Inspired by the traditional method DeepFool \cite{DBLP:conf/cvpr/Moosavi-Dezfooli16}, this method is proposed to generate the top-$k$ adversarial perturbation in multi-class classification. To apply to multi-label learning, we consider its prediction score for the only relevant label as the maximum score among all relevant labels like \cite{DBLP:conf/iccv/HuK0L21}.
    \item ML-CW-U. We adopt a similar untargeted vision of C\&W \cite{DBLP:conf/sp/Carlini017} as our comparison method, which adopts the loss function $[\max_{i \notin Y_p}f_i(\boldsymbol{x} + \boldsymbol{\epsilon}) - \min_{j \in Y_p}f_j(\boldsymbol{x} + \boldsymbol{\epsilon})]_+$. Compared to CW$^k$, this method simply extends C\&W to multi-label adversarial learning while not considering the order factor. 
\end{itemize}

\subsection{Model Details}
\label{appsubsec:model}

Table \ref{tab:modelinfo} summarizes the parameters and mAP results of the well-trained models used on each dataset, where the momentum for SGD is empirically set as 0.9. Both ResNet-50 and ResNet-101 are pre-trained on ImageNet \cite{DBLP:conf/cvpr/DengDSLL009}. Utilizing the binary cross-entropy loss, we further fine-tune the model to fit the classification task on different datasets. 

\subsection{Category Description}
\label{appsubsec:classes}

For the experiments under the global selection scheme on VOC, COCO, and NUS, we present the specified categories included in each type in Table \ref{tab:voc_type_info}, Table \ref{tab:coco_type_info} and Table \ref{tab:nus_type_info}, respectively. We select various scales of specified label sets to better show the performance of our method under different parameters.

\begin{table}[htbp]
  \centering
  \caption{Correspondence between types and categories on VOC.}
  \renewcommand\arraystretch{1.1}
  \newcommand{\tabincell}[2]{\begin{tabular}{@{}#1@{}}#2\end{tabular}}
  \setlength{\tabcolsep}{3mm}{
    \begin{tabular}{c|ccccc}
    \toprule
    Type  & \multicolumn{5}{c}{Categories} \\
    \midrule
    \midrule
    Vehicles & \multicolumn{5}{c}{bird, cat, cow, dog, horse, sheep.} \\ 
    Animals & \multicolumn{5}{c}{\tabincell{c}{aeroplane, bicycle, boat, bus, \\ car, motorbike, train.}} \\ 
    Household & \multicolumn{5}{c}{\tabincell{c}{bottle, chair, dining table, \\ potted plant, sofa, TV monitor.}} \\ 
    Person & \multicolumn{5}{c}{person.} \\
    \bottomrule
    \end{tabular}}%
  \label{tab:voc_type_info}%
\end{table}%

\begin{table}[htbp]
  \centering
  \caption{Correspondence between types and categories on COCO.}
  \renewcommand\arraystretch{1.1}
  \newcommand{\tabincell}[2]{\begin{tabular}{@{}#1@{}}#2\end{tabular}}
  \setlength{\tabcolsep}{3mm}{
    \begin{tabular}{c|ccccc}
    \toprule
    Type  & \multicolumn{5}{c}{Categories} \\
    \midrule
    \midrule
    Person & \multicolumn{5}{c}{person.} \\ 
    \tabincell{c}{Daily \\ necessities} & \multicolumn{5}{c}{\tabincell{c}{backpack, umbrella, handbag, tie, \\ suitcase, book, clock, vase, \\ scissors, teddy bear, toothbrush.}} \\
    Furniture & \multicolumn{5}{c}{\tabincell{c}{chair, couch, potted plant, \\ bed, dining table.}} \\
    \tabincell{c}{Electrical \\ equipment} & \multicolumn{5}{c}{\tabincell{c}{TV, laptop, mouse, remote, \\ keyboard, cell phone, microwave, \\ oven, toaster, refrigerator.}} \\
    \bottomrule
    \end{tabular}}%
  \label{tab:coco_type_info}%
\end{table}%

\begin{table}[htbp]
  \centering
  \caption{Correspondence between types and categories on NUS.}
  \renewcommand\arraystretch{1.1}
  \newcommand{\tabincell}[2]{\begin{tabular}{@{}#1@{}}#2\end{tabular}}
  \setlength{\tabcolsep}{2mm}{
    \begin{tabular}{c|ccccc}
    \toprule
    Type & \multicolumn{5}{c}{Categories} \\
    \midrule
    \midrule
    Animals & \multicolumn{5}{c}{\tabincell{c}{elk, bear, cat, zebra, fish, \\ whales, dog, cow, horse, fox, \\ birds, tiger, animal.}} \\ 
    Buildings & \multicolumn{5}{c}{\tabincell{c}{bridge, statue, tower, temple, \\ buildings, house, castle.}} \\ 
    Landscape & \multicolumn{5}{c}{\tabincell{c}{waterfall, rainbow, lake, sunset, \\ moon, ocean, beach, town, \\ glacier, sun, clouds, valley, \\ cityscape, frost, sky, harbor, garden, \\ earthquake, surf, mountain, nighttime.}} \\ 
    Traffic & \multicolumn{5}{c}{\tabincell{c}{plane, airport, train, boats, \\ cars, road, vehicle, street, railroad.}} \\
    \bottomrule
    \end{tabular}}%
  \label{tab:nus_type_info}%
\end{table}%

\subsection{Number of Samples under Different Settings}

As the selected samples have to satisfy $\left |Y_p  \right | \geq k + \left | \mathcal{S} \right |$, the number of samples under various settings is different. Thus, we present the specific numbers of samples and their average specified labels under different settings in Table \ref{tab:number_global_voc}-\ref{tab:number_random}. In particular, Table \ref{tab:number_global_voc}, Table \ref{tab:number_global_coco}, and Table \ref{tab:number_global_nus} show the correspondence under the global selection scheme on each dataset, respectively. And Table \ref{tab:number_random} integrates the number of samples on different datasets under the random selection scheme. Please note that we only select up to 1000 samples as our test images for each experiment.

\begin{table}[htbp]
  \centering
  \caption{Number of samples under different types and $k$ values on VOC.}
  \renewcommand\arraystretch{0.9}
  \newcommand{\tabincell}[2]{\begin{tabular}{@{}#1@{}}#2\end{tabular}}
  \setlength{\tabcolsep}{5mm}{
    \begin{tabular}{c|ccc}
    \toprule
    Type & $k$   & $\left | \mathcal{S} \right |$ & $n$ \\
    \midrule
    \midrule
    \multirow{3}[0]{*}{Vehicles} & 1     & 1.0000  & 365 \\
          & 2     & 1.0384  & 26 \\
          & 3     & 1.0000  & 6 \\
    \midrule
    \multirow{3}[0]{*}{Animals} & 1     & 1.0000  & 213 \\
          & 2     & 1.0000  & 45 \\
          & 3     & 1.0000  & 7 \\
    \midrule
    \multirow{3}[0]{*}{Household} & 1     & 1.0000  & 120 \\
          & 2     & 1.0000  & 28 \\
          & 3     & 1.0000  & 1 \\
    \midrule
    \multirow{3}[0]{*}{Person} & 1     & 1.0000  & 888 \\
          & 2     & 1.0000  & 319 \\
          & 3     & 1.0000  & 59 \\
    \bottomrule
    \end{tabular}%
  \label{tab:number_global_voc}}%
\end{table}%

\begin{table}[htbp]
  \centering
  \caption{Number of samples under different types and $k$ values on COCO.}
  \renewcommand\arraystretch{0.9}
  \newcommand{\tabincell}[2]{\begin{tabular}{@{}#1@{}}#2\end{tabular}}
  \setlength{\tabcolsep}{5mm}{
    \begin{tabular}{c|ccc}
    \toprule
    Type & $k$   & $\left | \mathcal{S} \right |$ & $n$ \\
    \midrule
    \midrule
    \multirow{3}[0]{*}{Person} & 3     & 1.0000  & 1000 \\
          & 5     & 1.0000  & 1000 \\
          & 10    & 1.0000  & 82 \\
    \midrule
    \multirow{3}[0]{*}{\tabincell{c}{Daily \\ necessities}} & 3     & 1.0700  & 1000 \\
          & 5     & 1.1089  & 790 \\
          & 10    & 1.1666  & 30 \\
    \midrule
    \multirow{3}[0]{*}{Furniture} & 3     & 1.1519  & 1000 \\
          & 5     & 1.3539  & 1000 \\
          & 10    & 1.7555  & 45 \\
    \midrule
    \multirow{3}[0]{*}{\tabincell{c}{Electrical \\ equipment}} & 3     & 1.2130  & 1000 \\
          & 5     & 1.3091  & 702 \\
          & 10    & 1.3478  & 23 \\
    \bottomrule
    \end{tabular}}%
  \label{tab:number_global_coco}%
\end{table}%

\begin{table}[htbp]
  \centering
  \caption{Number of samples under different types and $k$ values on NUS.}
  \renewcommand\arraystretch{0.9}
  \newcommand{\tabincell}[2]{\begin{tabular}{@{}#1@{}}#2\end{tabular}}
  \setlength{\tabcolsep}{5mm}{
    \begin{tabular}{c|ccc}
    \toprule
    Type & $k$   & $\left | \mathcal{S} \right |$ & $n$ \\
    \midrule
    \midrule
    \multirow{3}[0]{*}{Animals} & 2     & 1.2710  & 1000 \\
          & 3     & 1.2203  & 590 \\
          & 5     & 1.1818  & 66 \\
    \midrule
    \multirow{3}[0]{*}{Buildings} & 2     & 1.0080  & 1000 \\
          & 3     & 1.0250  & 1000 \\
          & 5     & 1.0636  & 535 \\
    \midrule
    \multirow{3}[0]{*}{Landscape} & 2     & 1.2940  & 1000 \\
          & 3     & 1.4340  & 1000 \\
          & 5     & 1.7027  & 259 \\
    \midrule
    \multirow{3}[0]{*}{Traffic} & 2     & 1.1150  & 1000 \\
          & 3     & 1.0849  & 860 \\
          & 5     & 1.0588  & 255 \\
    \bottomrule
    \end{tabular}%
  \label{tab:number_global_nus}}%
\end{table}%

\begin{table}[htbp]
  \centering
  \caption{Number of samples under random selection scheme.}
  \renewcommand\arraystretch{0.9}
  \newcommand{\tabincell}[2]{\begin{tabular}{@{}#1@{}}#2\end{tabular}}
  \setlength{\tabcolsep}{6mm}{
    \begin{tabular}{c|cc|c}
    \toprule
    Dataset & $k$   & $\left | \mathcal{S} \right |$ & $n$ \\
    \midrule
    \midrule
    \multirow{2}[0]{*}{VOC} & 2     & 2     & 69 \\
          & 3     & 3     & 11 \\
    \midrule
    \multirow{8}[0]{*}{COCO} & \multirow{2}[0]{*}{3} & 2     & 1000 \\
          &       & 3     & 1000 \\
          & \multirow{3}[0]{*}{5} & 2     & 1000 \\
          &       & 3     & 1000 \\
          &       & 5     & 206 \\
          & \multirow{3}[0]{*}{10} & 2     & 63 \\
          &       & 3     & 21 \\
          &       & 5     & 2 \\
    \midrule
    \multirow{6}[0]{*}{NUS} & 2     & 2     & 1000 \\
          & \multirow{2}[0]{*}{3} & 2     & 1000 \\
          &       & 3     & 1000 \\
          & \multirow{3}[0]{*}{5} & 2     & 1000 \\
          &       & 3     & 559 \\
          &       & 5     & 24 \\
    \bottomrule
    \end{tabular}}%
  \label{tab:number_random}%
\end{table}%

\section{Additional Results and Analysis}
\label{appsec:add_results}

\begin{table*}[htbp]
  \centering
  \caption{The rest results of competitors and our method with the maximum iteration 300 under different $k$ values and \underline{globally} selected $\mathcal{S}$ on \underline{COCO}, where $\Delta$ refers to the difference between the original value and the perturbed value of corresponding metrics. $\downarrow$ means the smaller the value the better, and $\uparrow$ is the opposite. The best results under each set of parameters are bolded.}
  \renewcommand\arraystretch{0.9}
  \newcommand{\tabincell}[2]{\begin{tabular}{@{}#1@{}}#2\end{tabular}}
  \setlength{\tabcolsep}{3.5mm}{
    \begin{tabular}{c|cc|cccccc}
    \toprule
    Type & $k$   & Methods & $\Delta \textup{T}_k \textup{Acc} \downarrow$ & $\Delta \textup{P}@k \downarrow$ & $\Delta \textup{mAP}@k \downarrow$ & $\Delta \textup{NDCG}@k \downarrow$ & $\Delta l \uparrow$ & $\textup{APer} \downarrow$ \\
    \midrule
    \midrule
    \multirow{12}[4]{*}{\tabincell{c}{Daily \\ necessities}} & \multirow{4}[1]{*}{3} &ML-CW-U & 0.6040  & 0.3200  & 0.4015  & 0.3303  & 1.0690  & 1.4187  \\
          &       & $k$Fool & 0.6110  & 0.3613  & 0.4047  & 0.3350  & 0.8170  & 13.5042  \\
          &       & T$_{k}$ML-AP-U & 0.6060  & 0.3213  & 0.4112  & 0.3401  & 1.0700  & 1.1906  \\
          &       & $\textup{T}_k$MIA(Ours) & \textbf{0.2660} & \textbf{0.1163} & \textbf{0.1280} & \textbf{0.0958} & \textbf{1.0700} & \textbf{0.5063} \\
\cmidrule{2-9}          & \multirow{4}[1]{*}{5} &ML-CW-U & 0.4474  & 0.3534  & 0.4857  & 0.4168  & 1.1077  & 1.6318  \\
          &       & $k$Fool & 0.4246  & 0.3194  & 0.3845  & 0.3119  & 1.0139  & 18.3515  \\
          &       & T$_{k}$ML-AP-U & 0.4397  & 0.3333  & 0.4672  & 0.4017  & 1.1077  & 1.3634  \\
          &       & $\textup{T}_k$MIA(Ours) & \textbf{0.1825} & \textbf{0.0884} & \textbf{0.1022} & \textbf{0.0729} & \textbf{1.1089} & \textbf{0.5418} \\
\cmidrule{2-9}          & \multirow{4}[1]{*}{10} &ML-CW-U & 0.0000  & 0.3367  & 0.4756  & 0.4389  & 1.1666  & 1.9130  \\
          &       & $k$Fool & 0.0333  & 0.2433  & 0.3237  & 0.2697  & 0.9333  & 23.1311  \\
          &       & T$_{k}$ML-AP-U & 0.0000  & 0.3533  & 0.4905  & 0.4571  & 1.1666  & 1.6887  \\
          &       & $\textup{T}_k$MIA(Ours) & \textbf{0.0000} & \textbf{0.0433} & \textbf{0.0605} & \textbf{0.0384} & \textbf{1.1666} & \textbf{0.5724} \\
    \midrule
    \multirow{12}[4]{*}{Furniture} & \multirow{4}[1]{*}{3} &ML-CW-U & 0.6270  & 0.3497  & 0.4509  & 0.3806  & 1.1510  & 1.2603  \\
          &       & $k$Fool & 0.5960  & 0.3467  & 0.3972  & 0.3300  & 0.6430  & 12.7266  \\
          &       & T$_{k}$ML-AP-U & 0.5990  & 0.3383  & 0.4437  & 0.3755  & 1.1519  & 1.0673  \\
          &       & $\textup{T}_k$MIA(Ours) & \textbf{0.3210} & \textbf{0.1446} & \textbf{0.1605} & \textbf{0.1221} & \textbf{1.1519} & \textbf{0.4519} \\
\cmidrule{2-9}          & \multirow{4}[1]{*}{5} &ML-CW-U & 0.4970  & 0.3970  & 0.5421  & 0.4710  & 1.3540  & 1.7799  \\
          &       & $k$Fool & 0.4800  & 0.3800  & 0.4684  & 0.3920  & 0.8240  & 53.0367  \\
          &       & T$_{k}$ML-AP-U & 0.4860  & 0.3964  & 0.5515  & 0.4821  & 1.3539  & 1.5302  \\
          &       & $\textup{T}_k$MIA(Ours) & \textbf{0.2490} & \textbf{0.1286} & \textbf{0.1513} & \textbf{0.1105} & \textbf{1.3539} & \textbf{0.7216} \\
\cmidrule{2-9}          & \multirow{4}[1]{*}{10} &ML-CW-U & 0.1333  & 0.3933  & 0.5712  & 0.5152  & 1.7333  & 2.1470  \\
          &       & $k$Fool & 0.1333  & 0.2933  & 0.4121  & 0.3296  & 0.7333  & 97.2627  \\
          &       & T$_{k}$ML-AP-U & 0.1333  & 0.4333  & 0.6065  & 0.5514  & 1.7333  & 1.9629  \\
          &       & $\textup{T}_k$MIA(Ours) & \textbf{0.0444} & \textbf{0.1133} & \textbf{0.1291} & \textbf{0.0891} & \textbf{1.7555} & \textbf{0.9502} \\
    \midrule
    \multirow{12}[4]{*}{\tabincell{c}{Electrical \\ equipment}} & \multirow{4}[1]{*}{3} &ML-CW-U & 0.5810  & 0.3323  & 0.4159  & 0.3508  & 1.2130  & 0.9664  \\
          &       & $k$Fool & 0.5730  & 0.3267  & 0.3798  & 0.3149  & 0.9040  & 12.5737  \\
          &       & T$_{k}$ML-AP-U & 0.5580  & 0.3126  & 0.4018  & 0.3372  & 1.2130  & 0.7998  \\
          &       & $\textup{T}_k$MIA(Ours) & \textbf{0.3330} & \textbf{0.1600} & \textbf{0.1758} & \textbf{0.1360} & \textbf{1.2130} & \textbf{0.4253} \\
\cmidrule{2-9}          & \multirow{4}[1]{*}{5} &ML-CW-U & 0.3647  & 0.2786  & 0.4063  & 0.3437  & 1.3077  & 1.2748  \\
          &       & $k$Fool & 0.3818  & 0.2823  & 0.3612  & 0.2886  & 0.9829  & 14.8029  \\
          &       & T$_{k}$ML-AP-U & 0.3589  & 0.2695  & 0.3981  & 0.3368  & 1.3091  & 1.0406  \\
          &       & $\textup{T}_k$MIA(Ours) & \textbf{0.2136} & \textbf{0.1128} & \textbf{0.1360} & \textbf{0.0985} & \textbf{1.3091} & \textbf{0.4702} \\
\cmidrule{2-9}          & \multirow{4}[1]{*}{10} &ML-CW-U & 0.0000  & 0.2652  & 0.4097  & 0.3546  & 1.3478  & 1.6874  \\
          &       & $k$Fool & 0.0000  & 0.1826  & 0.2591  & 0.1874  & 1.0435  & 8.7268  \\
          &       & T$_{k}$ML-AP-U & 0.0000  & 0.2086  & 0.3510  & 0.2944  & 1.3478  & 1.3869  \\
          &       & $\textup{T}_k$MIA(Ours) & \textbf{-0.0434} & \textbf{0.0608} & \textbf{0.0914} & \textbf{0.0679} & \textbf{1.3478} & \textbf{0.5378} \\
    \bottomrule
    \end{tabular}}%
  \label{tab:coco_global_rest_maxiter300}%
\end{table*}%

\subsection{Overall Performance under Scheme 1}

The overall performance of $\textup{T}_k$MIA and other methods under the global selection scheme on COCO, NUS, and VOC are shown in Table \ref{tab:coco_global_rest_maxiter300}, Table \ref{tab:nus_global_rest_maxiter300}, and Table \ref{tab:voc_global_maxiter300}, respectively. 

We can clearly see that our method completely outperforms the other comparison methods. Specifically, our method could generate smaller perturbations to achieve a slight change in metrics. Meanwhile, we obtain a larger $\Delta l$, which means that our approach successfully perturbs more specified categories. On one hand, we could observe that all methods generate larger perturbations with increasing the $k$ value while the increase in our average perturbation norm is smaller than others. For example, on the Traffic type of NUS, when $k$ changes from 2 to 5, the APer of ML-CW-U, $k$Fool, and T$_k$ML-AP-U increase 0.7295, 5.4391, and 0.6772, respectively. Our T$_k$MIA only produces 0.0934 increasing. For some types (Vehicles, Animals, Household types of VOC), its APer even produces a decreasing trend. On the other hand, most changes in measures for all methods get smaller as the $k$ value increases. But when there are large $k$ and $\left | \mathcal{S} \right |$, T$_k$ML-AP-U yields the opposite trend, such as on Daily necessities and Furniture types of COCO. This comparison indicates that our method is more robust in complex conditions. Through these two aspects, we verify the outstanding performance of our method.

\begin{table*}[htbp]
  \centering
  \caption{The rest results of competitors and our method with the maximum iteration 300 under different $k$ values and \underline{globally} selected $\mathcal{S}$ on \underline{NUS}, where $\Delta$ refers to the difference between the original value and the perturbed value of corresponding metrics. $\downarrow$ means the smaller the value the better, and $\uparrow$ is the opposite. The best results under each set of parameters are bolded.}
  \renewcommand\arraystretch{0.9}
  \setlength{\tabcolsep}{3.5mm}{
    \begin{tabular}{c|cc|cccccc}
    \toprule
    Type & $k$   & Methods & $\Delta \textup{T}_k \textup{Acc} \downarrow$ & $\Delta \textup{P}@k \downarrow$ & $\Delta \textup{mAP}@k \downarrow$ & $\Delta \textup{NDCG}@k \downarrow$ & $\Delta l \uparrow$ & $\textup{APer} \downarrow$ \\
    \midrule
    \midrule
    \multirow{12}[4]{*}{Animals} & \multirow{4}[1]{*}{2} &ML-CW-U & 0.4240  & 0.2545  & 0.3043  & 0.2612  & 1.2600  & 1.2388  \\
          &       & $k$Fool & 0.4080  & 0.2465  & 0.2755  & 0.2362  & 1.1290  & 5.1962  \\
          &       & T$_{k}$ML-AP-U & 0.4290  & 0.2585  & 0.3135  & 0.2697  & 1.2649  & 1.1038  \\
          &       & $\textup{T}_k$MIA(Ours) & \textbf{0.2410} & \textbf{0.1485} & \textbf{0.1715} & \textbf{0.1484} & \textbf{1.2710} & \textbf{0.9219} \\
\cmidrule{2-9}          & \multirow{4}[1]{*}{3} &ML-CW-U & 0.4288  & 0.2373  & 0.3096  & 0.2568  & 1.1712  & 1.4019  \\
          &       & $k$Fool & 0.4136  & 0.2508  & 0.2702  & 0.2222  & 1.0339  & 5.8748  \\
          &       & T$_{k}$ML-AP-U & 0.4470  & 0.2485  & 0.3282  & 0.2741  & 1.1847  & 1.2732  \\
          &       & $\textup{T}_k$MIA(Ours) & \textbf{0.1966} & \textbf{0.1040} & \textbf{0.1195} & \textbf{0.0961} & \textbf{1.2203} & \textbf{0.8284} \\
\cmidrule{2-9}          & \multirow{4}[1]{*}{5} &ML-CW-U & 0.3788  & 0.2000  & 0.2684  & 0.2184  & 0.9697  & 1.5046  \\
          &       & $k$Fool & 0.2879  & 0.2424  & 0.2702  & 0.2236  & 0.9848  & 10.4123  \\
          &       & T$_{k}$ML-AP-U & 0.3484  & 0.2121  & 0.3098  & 0.2588  & 0.9545  & 1.4826  \\
          &       & $\textup{T}_k$MIA(Ours) & \textbf{0.1364} & \textbf{0.0424} & \textbf{0.0498} & \textbf{0.0367} & \textbf{1.1818} & \textbf{0.7242} \\
    \midrule
    \multirow{12}[4]{*}{Landscape} & \multirow{4}[1]{*}{2} &ML-CW-U & 0.6160  & 0.4250  & 0.5113  & 0.4598  & 1.2150  & 1.8080  \\
          &       & $k$Fool & 0.6770  & 0.4985  & 0.5273  & 0.4841  & 1.2160  & 36.0644  \\
          &       & T$_{k}$ML-AP-U & 0.6310  & 0.4405  & 0.5313  & 0.4795  & 1.2569  & 1.4559  \\
          &       & $\textup{T}_k$MIA(Ours) & \textbf{0.1300} & \textbf{0.0910} & \textbf{0.0993} & \textbf{0.0896} & \textbf{1.2940} & \textbf{1.0430} \\
\cmidrule{2-9}          & \multirow{4}[1]{*}{3} &ML-CW-U & 0.5660  & 0.3900  & 0.5130  & 0.4537  & 1.2350  & 2.2045  \\
          &       & $k$Fool & 0.6200  & 0.4800  & 0.5141  & 0.4694  & 1.3130  & 70.4729  \\
          &       & T$_{k}$ML-AP-U & 0.5770  & 0.4016  & 0.5337  & 0.4734  & 1.3179  & 1.8304  \\
          &       & $\textup{T}_k$MIA(Ours) & \textbf{0.0770} & \textbf{0.0520} & \textbf{0.0630} & \textbf{0.0537} & \textbf{1.4260} & \textbf{1.3236} \\
\cmidrule{2-9}          & \multirow{4}[1]{*}{5} &ML-CW-U & 0.3205  & 0.2873  & 0.4037  & 0.3605  & 1.0811  & 2.4561  \\
          &       & $k$Fool & 0.4363  & 0.4865  & 0.5635  & 0.5044  & 1.4131  & 159.2512  \\
          &       & T$_{k}$ML-AP-U & 0.3359  & 0.3220  & 0.4665  & 0.4200  & 1.2548  & 2.1796  \\
          &       & $\textup{T}_k$MIA(Ours) & \textbf{-0.0502} & \textbf{0.0232} & \textbf{0.0385} & \textbf{0.0337} & \textbf{1.6525} & \textbf{1.7653} \\
    \midrule
    \multirow{12}[4]{*}{Traffic} & \multirow{4}[1]{*}{2} &ML-CW-U & 0.3750  & 0.2360  & 0.2773  & 0.2419  & 1.0970  & 0.9301  \\
          &       & $k$Fool & 0.4680  & 0.3485  & 0.3743  & 0.3448  & 1.0950  & 9.1090  \\
          &       & T$_{k}$ML-AP-U & 0.3830  & 0.2405  & 0.2850  & 0.2485  & 1.1060  & 0.7939  \\
          &       & $\textup{T}_k$MIA(Ours) & \textbf{0.2420} & \textbf{0.1530} & \textbf{0.1705} & \textbf{0.1487} & \textbf{1.1150} & \textbf{0.5297} \\
\cmidrule{2-9}          & \multirow{4}[1]{*}{3} &ML-CW-U & 0.4669  & 0.2455  & 0.3100  & 0.2576  & 1.0128  & 1.3553  \\
          &       & $k$Fool & 0.5395  & 0.3946  & 0.4326  & 0.3875  & 1.0779  & 9.5689  \\
          &       & T$_{k}$ML-AP-U & 0.4698  & 0.2469  & 0.3154  & 0.2622  & 1.0326  & 1.1458  \\
          &       & $\textup{T}_k$MIA(Ours) & \textbf{0.2500} & \textbf{0.1213} & \textbf{0.1337} & \textbf{0.1059} & \textbf{1.0849} & \textbf{0.5889} \\
\cmidrule{2-9}          & \multirow{4}[1]{*}{5} &ML-CW-U & 0.3843  & 0.2047  & 0.2812  & 0.2283  & 0.8235  & 1.6596  \\
          &       & $k$Fool & 0.4471  & 0.3867  & 0.4279  & 0.4279  & 1.0510  & 14.5481  \\
          &       & T$_{k}$ML-AP-U & 0.3608  & 0.2125  & 0.2944  & 0.2435  & 0.8902  & 1.4711  \\
          &       & $\textup{T}_k$MIA(Ours) & \textbf{0.1725} & \textbf{0.0816} & \textbf{0.0863} & \textbf{0.0612} & \textbf{1.0588} & \textbf{0.6231} \\
    \bottomrule
    \end{tabular}}%
  \label{tab:nus_global_rest_maxiter300}%
\end{table*}%

\subsection{Overall Performance under Scheme 2}

In the main paper, we only present the performance of different methods under Scheme 2 with the maximum iteration of 300 and $\delta = 2$ on NUS. In this part, Table \ref{tab:voc_random_maxiter100_pert2} to Table \ref{tab:nus_random_maxiter500_pert3} present the overall performance of different methods under various parameters on three datasets in the random selection scheme, where the maximum iteration ranges from \{100, 300, 500\} and $\delta = 2, 3$. 

From the results on VOC and COCO, we could more intuitively find the superiority of our method. But for the results on NUS, which are shown in Table \ref{tab:nus_random_maxiter100_pert2}-\ref{tab:nus_random_maxiter500_pert3}, the $\Delta l$ and APer values of our perturbation are not the best. We think there are two possible reasons. The first one is that since the model training on NUS can only achieve 0.828 mAP which is shown in Table \ref{tab:modelinfo}, its original classification effect is relatively poor. This means there exist more irrelevant labels in the top-$k$ set. To satisfy measure imperceptibility, the model needs to take more effort to push them outside the top-$k$ position; (2) as our perturbation is required to achieve both visual and measure imperceptibility, it has to make a trade-off between the perturbed number and perturbation size in some cases. We could see that $k$Fool usually presents a better average attack effectiveness while generating an obvious perturbation, and T$_k$ML-AP-U makes a smaller APer but overlook the impact on metrics. In contrast, our T$_k$MIA could achieve both properties and avoid these problems. Therefore, these results and analyses further validate the effectiveness of our perturbation and optimization framework.

\begin{figure*}[htbp]
  \centering
  \vspace{-0.2cm}
  \includegraphics[width=\linewidth]{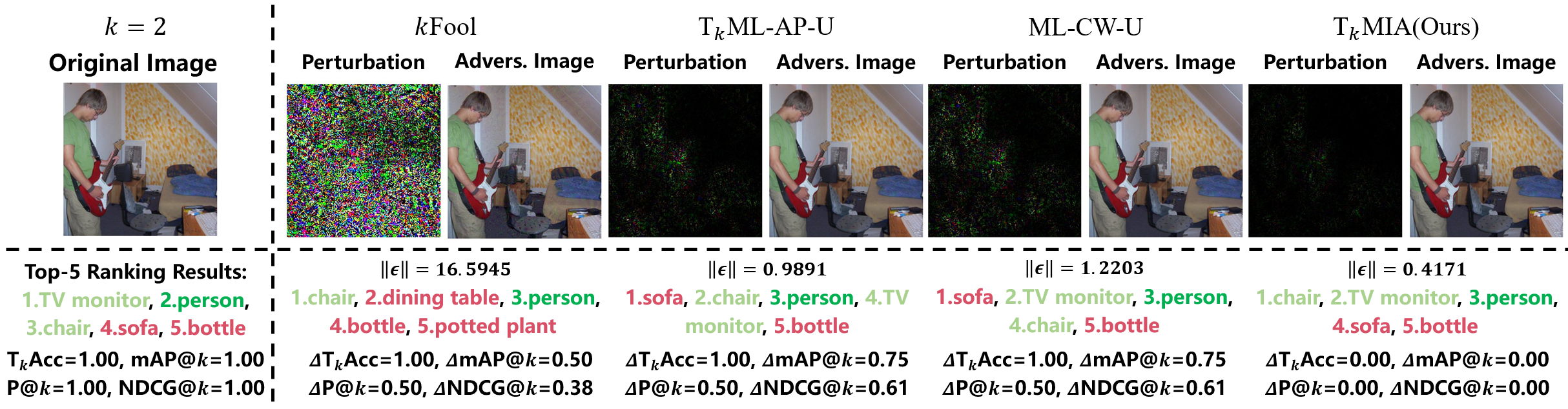}
\caption{The top-2 successful performance comparisons under \textit{global selection scheme} on VOC, where the maximum iteration is 300. All perturbation intensities are magnified by a factor of 100 to enhance contrast and visibility. The specified labels, relevant labels, and irrelevant labels are marked with \textcolor[rgb]{ 0,  .69,  .314}{dark green}, \textcolor[rgb]{ .573,  .816,  .314}{light green}, and \textcolor[rgb]{ .847,  .31,  .4}{red}, respectively.}
\label{fig:voc_global_example}
\end{figure*}

\begin{figure*}[htbp]
  \centering
  \vspace{-0.2cm}
  \includegraphics[width=\linewidth]{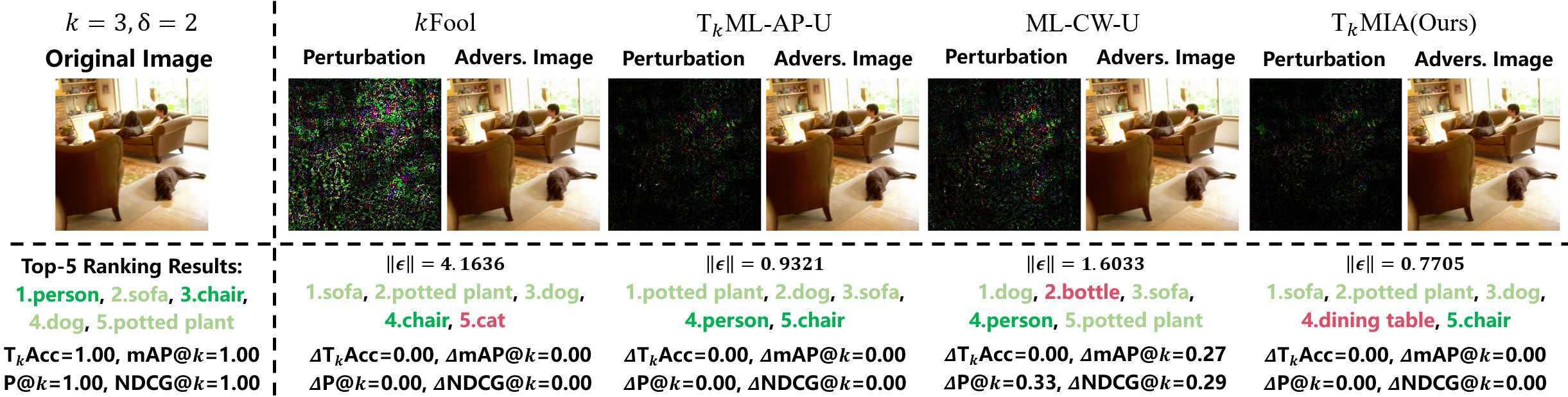}
\caption{The top-3 successful performance comparisons under \textit{random selection scheme} on VOC, where the maximum iteration is 300. All perturbation intensities are magnified by a factor of 100 to enhance contrast and visibility. The specified labels, relevant labels, and irrelevant labels are marked with \textcolor[rgb]{ 0,  .69,  .314}{dark green}, \textcolor[rgb]{ .573,  .816,  .314}{light green}, and \textcolor[rgb]{ .847,  .31,  .4}{red}, respectively.}
\label{fig:voc_random_example}
\end{figure*}

\begin{figure*}[t]
  \centering
  \vspace{-0.2cm}
  \includegraphics[width=\linewidth]{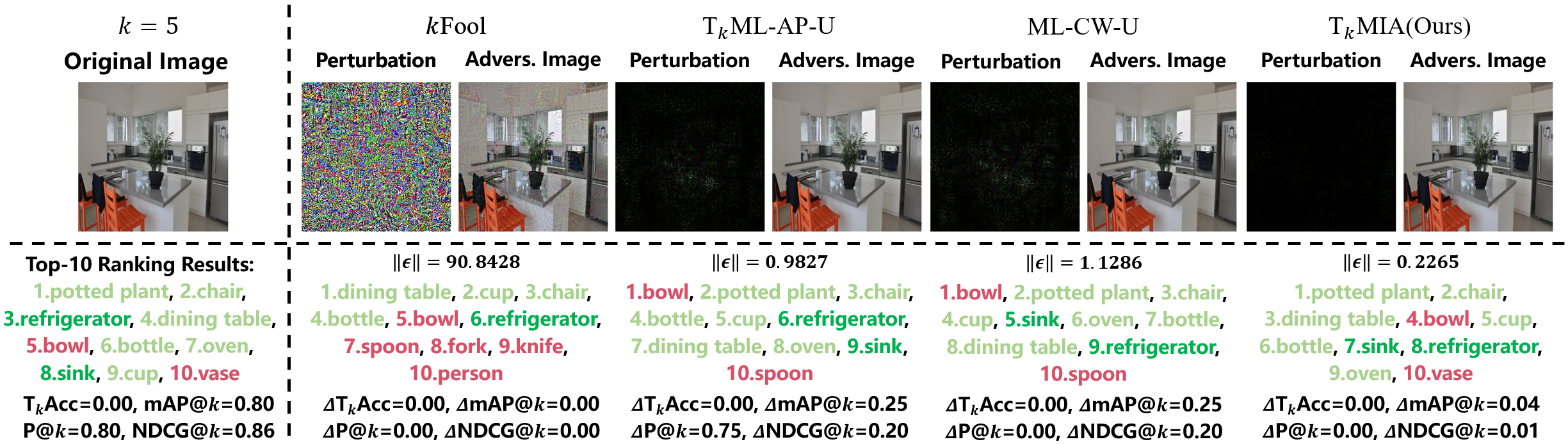}
\caption{The top-5 successful performance comparisons under \textit{global selection scheme} on COCO, where the maximum iteration is 300. All perturbation intensities are magnified by a factor of 100 to enhance contrast and visibility. The specified labels, relevant labels, and irrelevant labels are marked with \textcolor[rgb]{ 0,  .69,  .314}{dark green}, \textcolor[rgb]{ .573,  .816,  .314}{light green}, and \textcolor[rgb]{ .847,  .31,  .4}{red}, respectively.}
\label{fig:coco_global_example}
\end{figure*}
\vspace{-0.2cm}

\subsection{Visualization Results}

We set the maximum iteration as 300 and $\delta = 2$ to present the image results under Scheme 1 and Scheme 2 from Figure \ref{fig:voc_global_example} to Figure \ref{fig:coco_random_example}, where the first two sets of images are the performance on VOC and the others are on COCO. 

In Figure \ref{fig:voc_global_example}, we see that for our method, the metrics of the perturbed image remain unchanged. While other methods produce a clear performance degradation that causes a significant change in metrics. In Figure \ref{fig:voc_random_example}, it is not hard to find that even if all methods achieve the ideal results, our perturbation could craft a smaller perturbation to lower the ranking position of specified labels. Hence, the overall performance of our method demonstrates that the proposed perturbation is more effective than other attacks when pursuing both visual and measure imperceptibility. 

From the results on COCO, we could find that our perturbation manage to push the specified categories out of the top-$k$ region with slight efforts. Meanwhile, its changes in ranking-based metric values could be almost ignored, which achieves our perturbation effect. For $k$Fool, its perturbations always make the perturbed images visually perceptible though it could fool all metrics in some cases. Although T$_k$ML-AP-U performs well in some images, its overall effect is not better than our perturbation. This is consistent with our analysis shown in Sec.\ref{subsec:results}.

\begin{table*}[htbp]
  \centering
  \caption{The results of competitors and our method with the maximum iteration 300 under different $k$ values and \underline{globally} selected $\mathcal{S}$ on \underline{VOC}, where $\Delta$ refers to the difference between the original value and the perturbed value of corresponding metrics. $\downarrow$ means the smaller the value the better, and $\uparrow$ is the opposite. The best results under each set of parameters are bolded.}
  \renewcommand\arraystretch{0.9}
  \newcommand{\tabincell}[2]{\begin{tabular}{@{}#1@{}}#2\end{tabular}}
  \setlength{\tabcolsep}{3.5mm}{
    \begin{tabular}{c|cc|cccccc}
    \toprule
    Type & $k$ & Methods & $\Delta \textup{T}_k \textup{Acc} \downarrow$ & $\Delta \textup{P}@k \downarrow$ & $\Delta \textup{mAP}@k \downarrow$ & $\Delta \textup{NDCG}@k \downarrow$ & $\Delta l \uparrow$ & $\textup{APer} \downarrow$ \\
    \midrule
    \midrule
    \multirow{12}[4]{*}{Vehicles} & \multirow{4}[1]{*}{1} &ML-CW-U & 0.3452  & 0.3452  & 0.3452  & 0.3452  & 1.0000  & 1.0053  \\
          &       & $k$Fool & 0.1978  & 0.1978  & 0.1978  & 0.1978  & 0.9973  & 1.8162  \\
          &       & T$_{k}$ML-AP-U & 0.3489  & 0.3489  & 0.3489  & 0.3489  & 1.0000  & 0.8861  \\
          &       & $\textup{T}_k$MIA(Ours) & \textbf{0.1698} & \textbf{0.1698} & \textbf{0.1698} & \textbf{0.1698} & \textbf{1.0000} & \textbf{0.7644} \\
\cmidrule{2-9}          & \multirow{4}[1]{*}{2} &ML-CW-U & 0.4615  & 0.2308  & 0.3173  & 0.2569  & 1.0000  & 1.7331  \\
          &       & $k$Fool & 0.6923  & 0.3846  & 0.4519  & 0.3759  & 0.8846  & 23.1394  \\
          &       & T$_{k}$ML-AP-U & 0.5000  & 0.2500  & 0.3365  & 0.2717  & 1.0384  & 1.4915  \\
          &       & $\textup{T}_k$MIA(Ours) & \textbf{0.1538} & \textbf{0.0769} & \textbf{0.0769} & \textbf{0.0595} & \textbf{1.0384} & \textbf{0.9265} \\
\cmidrule{2-9}          & \multirow{4}[1]{*}{3} &ML-CW-U & 0.5000  & 0.2778  & 0.4352  & 0.3622  & 1.0000  & 1.3096  \\
          &       & $k$Fool & 0.5000  & 0.4444  & 0.4630  & 0.4218  & 1.0000  & 23.3179  \\
          &       & T$_{k}$ML-AP-U & 0.5000  & 0.2222  & 0.3425  & 0.2737  & 1.0000  & 1.1204  \\
          &       & $\textup{T}_k$MIA(Ours) & \textbf{0.1666} & \textbf{0.1111} & \textbf{0.1111} & \textbf{0.0884} & \textbf{1.0000} & \textbf{0.4735} \\
    \midrule
    \multirow{12}[4]{*}{Animals} & \multirow{4}[1]{*}{1} &ML-CW-U & 0.3333  & 0.3333  & 0.3333  & 0.3333  & 1.0000  & 0.7220  \\
          &       & $k$Fool & 0.2639  & 0.2639  & 0.2639  & 0.2639  & 1.0000  & \textbf{0.5135} \\
          &       & T$_{k}$ML-AP-U & 0.3287  & 0.3287  & 0.3287  & 0.3287  & 1.0000  & 0.6297  \\
          &       & $\textup{T}_k$MIA(Ours) & \textbf{0.1851} & \textbf{0.1851} & \textbf{0.1851} & \textbf{0.1851} & \textbf{1.0000} & 0.5758  \\
\cmidrule{2-9}          & \multirow{4}[1]{*}{2} &ML-CW-U & 0.5556  & 0.3111  & 0.4056  & 0.3413  & 1.0000  & 1.3934  \\
          &       & $k$Fool & 0.6000  & 0.4000  & 0.4278  & 0.3799  & 0.9111  & 4.3568  \\
          &       & T$_{k}$ML-AP-U & 0.5333  & 0.3222  & 0.4055  & 0.3498  & 1.0000  & 1.1133  \\
          &       & $\textup{T}_k$MIA(Ours) & \textbf{0.1555} & \textbf{0.0777} & \textbf{0.0777} & \textbf{0.0601} & \textbf{1.0000} & \textbf{0.6988} \\
\cmidrule{2-9}          & \multirow{4}[1]{*}{3} &ML-CW-U & 0.4286  & 0.1429  & 0.2619  & 0.2011  & 1.0000  & 1.1452  \\
          &       & $k$Fool & 0.4286  & 0.4286  & 0.4286  & 0.3791  & 1.0000  & 8.2682  \\
          &       & T$_{k}$ML-AP-U & 0.5714  & 0.2857  & 0.4206  & 0.3439  & 1.0000  & 1.1069  \\
          &       & $\textup{T}_k$MIA(Ours) & \textbf{0.1428} & \textbf{0.0952} & \textbf{0.1111} & \textbf{0.0845} & \textbf{1.0000} & \textbf{0.3470} \\
    \midrule
    \multirow{12}[4]{*}{Household} & \multirow{4}[1]{*}{1} &ML-CW-U & 0.2500  & 0.2500  & 0.2500  & 0.2500  & 0.9333  & 0.4442  \\
          &       & $k$Fool & 0.2750  & 0.2750  & 0.2750  & 0.2750  & 0.9500  & 1.3002  \\
          &       & T$_{k}$ML-AP-U & 0.2833  & 0.2833  & 0.2833  & 0.2833  & 1.0000  & 0.5046  \\
          &       & $\textup{T}_k$MIA(Ours) & \textbf{0.1583} & \textbf{0.1583} & \textbf{0.1583} & \textbf{0.1583} & \textbf{1.0000} & \textbf{0.4421} \\
\cmidrule{2-9}          & \multirow{4}[1]{*}{2} &ML-CW-U & 0.6071  & 0.3393  & 0.4375  & 0.3676  & 1.0000  & 1.6331  \\
          &       & $k$Fool & 0.5714  & 0.3571  & 0.3929  & 0.3410  & 0.4286  & 9.2461  \\
          &       & T$_{k}$ML-AP-U & 0.4642  & 0.2857  & 0.3571  & 0.3099  & 1.0000  & 1.2804  \\
          &       & $\textup{T}_k$MIA(Ours) & \textbf{0.2142} & \textbf{0.1250} & \textbf{0.1339} & \textbf{0.1128} & \textbf{1.0000} & \textbf{0.5842} \\
\cmidrule{2-9}          & \multirow{4}[1]{*}{3} &ML-CW-U & 1.0000  & 0.6667  & 0.8333  & 0.7039  & 1.0000  & 2.9472  \\
          &       & $k$Fool & 1.0000  & 0.6667  & 0.6667  & 0.5307  & 1.0000  & 6.6913  \\
          &       & T$_{k}$ML-AP-U & 1.0000  & 0.6666  & 0.6666  & 0.5307  & 1.0000  & 2.3044  \\
          &       & $\textup{T}_k$MIA(Ours) & \textbf{1.0000} & \textbf{0.3333} & \textbf{0.3333} & \textbf{0.2346} & \textbf{1.0000} & \textbf{0.2817} \\
    \midrule
    \multirow{12}[4]{*}{Person} & \multirow{4}[1]{*}{1} &ML-CW-U & 0.3288  & 0.3288  & 0.3288  & 0.3288  & 0.9955  & 1.0851  \\
          &       & $k$Fool & 0.2568  & 0.2568  & 0.2568  & 0.2568  & 1.0000  & \textbf{0.5595} \\
          &       & T$_{k}$ML-AP-U & 0.3063  & 0.3063  & 0.3063  & 0.3063  & 0.9966  & 0.9056  \\
          &       & $\textup{T}_k$MIA(Ours) & \textbf{0.1182} & \textbf{0.1182} & \textbf{0.1182} & \textbf{0.1182} & \textbf{1.0000} & 0.7834  \\
\cmidrule{2-9}          & \multirow{4}[1]{*}{2} &ML-CW-U & 0.5204  & 0.3056  & 0.4130  & 0.3542  & 0.9875  & 1.7190  \\
          &       & $k$Fool & 0.4013  & 0.2414  & 0.2712  & 0.2322  & 0.8307  & 5.6123  \\
          &       & T$_{k}$ML-AP-U & 0.4984  & 0.3009  & 0.3957  & 0.3420  & 1.0000  & 1.4489  \\
          &       & $\textup{T}_k$MIA(Ours) & \textbf{0.0721} & \textbf{0.0454} & \textbf{0.0595} & \textbf{0.0521} & \textbf{1.0000} & \textbf{1.1268} \\
\cmidrule{2-9}          & \multirow{4}[1]{*}{3} &ML-CW-U & 0.4576  & 0.2542  & 0.3964  & 0.3325  & 1.0000  & 1.8746  \\
          &       & $k$Fool & 0.3559  & 0.1808  & 0.2015  & 0.1557  & 0.8305  & 3.8721  \\
          &       & T$_{k}$ML-AP-U & 0.4406  & 0.2485  & 0.3926  & 0.3306  & 0.9491  & 1.6421  \\
          &       & $\textup{T}_k$MIA(Ours) & \textbf{-0.0338} & \textbf{0.0169} & \textbf{0.0348} & \textbf{0.0311} & \textbf{1.0000} & \textbf{1.2935} \\
    \bottomrule
    \end{tabular}}%
  \label{tab:voc_global_maxiter300}%
\end{table*}%

\begin{table*}[htbp]
  \centering
  \caption{The results of competitors and our method with the maximum iteration 100 and $\delta = 2$ on \underline{VOC} under different $k$ values and sizes of \underline{randomly} selected $\mathcal{S}$, where $\Delta$ refers to the difference between the original value and the perturbed value of corresponding metrics. $\downarrow$ means the smaller the value the better, and $\uparrow$ is the opposite. The best results under each set of parameters are bolded.}
  \renewcommand\arraystretch{0.9}
  \setlength{\tabcolsep}{4mm}{
    \begin{tabular}{c|cc|cccccc}
    \toprule
    $k$   & $\left | \mathcal{S} \right |$ & Methods & $\Delta \textup{T}_k \textup{Acc} \downarrow$ & $\Delta \textup{P}@k \downarrow$ & $\Delta \textup{mAP}@k \downarrow$ & $\Delta \textup{NDCG}@k \downarrow$ & $\Delta l \uparrow$ & $\textup{APer} \downarrow$ \\
    \midrule
    \midrule
    \multirow{4}[1]{*}{2} & \multirow{4}[1]{*}{2} & ML-CW-U & 0.5072  & 0.3043  & 0.4022  & 0.3470  & 1.3478  & 1.4079  \\
          &       & $k$Fool & 0.5507  & 0.3188  & 0.3478  & 0.2926  & 1.4492  & 12.2294  \\
          &       & T$_{k}$ML-AP-U & 0.5652  & 0.3913  & 0.4673  & 0.4208  & 1.5797  & \textbf{1.2688} \\
          &       & $\textup{T}_k$MIA(Ours) & \textbf{0.3043} & \textbf{0.2028} & \textbf{0.2246} & \textbf{0.1996} & \textbf{1.9855} & 1.4456  \\
    \midrule
    \multirow{4}[1]{*}{3} & \multirow{4}[1]{*}{2} & ML-CW-U & 0.1818  & 0.1212  & 0.2273  & 0.1976  & 1.4545  & 1.3738  \\
          &       & $k$Fool & 0.1818  & 0.1212  & 0.1565  & 0.1335  & 1.0000  & 25.7817  \\
          &       & T$_{k}$ML-AP-U & 1.4545  & 1.1405  & 0.0909  & 0.0909  & 0.1666  & \textbf{0.1493} \\
          &       & $\textup{T}_k$MIA(Ours) & \textbf{0.0000} & \textbf{0.0606} & \textbf{0.0707} & \textbf{0.0695} & \textbf{2.0000} & 1.2743  \\
    \bottomrule
    \end{tabular}}%
  \label{tab:voc_random_maxiter100_pert2}%
\end{table*}%

\begin{table*}[htbp]
  \centering
  \caption{The results of competitors and our method with the maximum iteration 300 and $\delta = 2$ on \underline{VOC} under different $k$ values and sizes of \underline{randomly} selected $\mathcal{S}$, where $\Delta$ refers to the difference between the original value and the perturbed value of corresponding metrics. $\downarrow$ means the smaller the value the better, and $\uparrow$ is the opposite. The best results under each set of parameters are bolded.}
  \renewcommand\arraystretch{0.9}
  \newcommand{\tabincell}[2]{\begin{tabular}{@{}#1@{}}#2\end{tabular}}
  \setlength{\tabcolsep}{4mm}{
    \begin{tabular}{c|cc|cccccc}
    \toprule
    $k$   & $\left | \mathcal{S} \right |$ & Methods & $\Delta \textup{T}_k \textup{Acc} \downarrow$ & $\Delta \textup{P}@k \downarrow$ & $\Delta \textup{mAP}@k \downarrow$ & $\Delta \textup{NDCG}@k \downarrow$ & $\Delta l \uparrow$ & $\textup{APer} \downarrow$ \\
    \midrule
    \midrule
    \multirow{4}[1]{*}{2} & \multirow{4}[1]{*}{2} & ML-CW-U & 0.7101  & 0.4638  & 0.5688  & 0.5031  & 1.9130  & 1.8886  \\
          &       & $k$Fool & 0.5507  & 0.3478  & 0.3768  & 0.3281  & 1.5072  & 17.3492  \\
          &       & T$_{k}$ML-AP-U & 0.6956  & 0.4782  & 0.5688  & 0.5110  & 1.9420  & 1.5410  \\
          &       & $\textup{T}_k$MIA(Ours) & \textbf{0.2898} & \textbf{0.1956} & \textbf{0.2246} & \textbf{0.2005} & \textbf{2.0000} & \textbf{1.4626} \\
    \midrule
    \multirow{4}[1]{*}{3} & \multirow{4}[1]{*}{2} & ML-CW-U & 0.3636  & 0.1818  & 0.3333  & 0.2773  & 1.8182  & 1.8550  \\
          &       & $k$Fool & 0.1818  & 0.1212  & 0.1414  & 0.1178  & 1.0909  & 27.6438  \\
          &       & T$_{k}$ML-AP-U & 0.3636  & 0.1818  & 0.3181  & 0.2615  & 2.0000  & 1.6427  \\
          &       & $\textup{T}_k$MIA(Ours) & \textbf{0.0000} & \textbf{0.0606} & \textbf{0.0707} & \textbf{0.0695} & \textbf{2.0000} & \textbf{1.2769} \\
    \bottomrule
    \end{tabular}}%
  \label{tab:voc_random_maxiter300_pert2}%
\end{table*}%

\begin{table*}[htbp]
  \centering
  \caption{The results of competitors and our method with the maximum iteration 100 and $\delta = 2$ on \underline{COCO} under different $k$ values and sizes of \underline{randomly} selected $\mathcal{S}$, where $\Delta$ refers to the difference between the original value and the perturbed value of corresponding metrics. $\downarrow$ means the smaller the value the better, and $\uparrow$ is the opposite. The best results under each set of parameters are bolded.}
  \renewcommand\arraystretch{0.9}
  \setlength{\tabcolsep}{4mm}{
    \begin{tabular}{c|cc|cccccc}
    \toprule
    $k$   & $\left | \mathcal{S} \right |$ & Methods & $\Delta \textup{T}_k \textup{Acc} \downarrow$ & $\Delta \textup{P}@k \downarrow$ & $\Delta \textup{mAP}@k \downarrow$ & $\Delta \textup{NDCG}@k \downarrow$ & $\Delta l \uparrow$ & $\textup{APer} \downarrow$ \\
    \midrule
    \midrule
    \multirow{8}[2]{*}{3} & \multirow{4}[1]{*}{2} &ML-CW-U & 0.6470  & 0.4900  & 0.6068  & 0.5478  & 1.8650  & 1.4768  \\
          &       & $k$Fool & 0.6190  & 0.3860  & 0.4428  & 0.3733  & 1.5900  & 10.9493  \\
          &       & T$_{k}$ML-AP-U & 0.6470  & 0.4810  & 0.6062  & 0.5443  & 1.9280  & 1.2629  \\
          &       & $\textup{T}_k$MIA(Ours) & \textbf{0.3700} & \textbf{0.2250} & \textbf{0.2652} & \textbf{0.2213} & \textbf{1.9970} & \textbf{1.0867} \\
\cmidrule{2-9}          & \multirow{4}[1]{*}{3} &ML-CW-U & 0.7300  & 0.3417  & 0.4647  & 0.3715  & 1.9050  & 1.4998  \\
          &       & $k$Fool & 0.6470  & 0.2746  & 0.3104  & 0.3104  & 1.9270  & 6.9378  \\
          &       & T$_{k}$ML-AP-U & 0.7330  & 0.3400  & 0.4703  & 0.3762  & 1.9780  & 1.2742  \\
          &       & $\textup{T}_k$MIA(Ours) & \textbf{0.5060} & \textbf{0.2166} & \textbf{0.2755} & \textbf{0.2114} & \textbf{2.0540} & \textbf{1.2499} \\
    \midrule
    \multirow{12}[4]{*}{5} & \multirow{4}[1]{*}{2} &ML-CW-U & 0.4620  & 0.4660  & 0.6133  & 0.5554  & 1.7510  & 1.8564  \\
          &       & $k$Fool & 0.4520  & 0.3630  & 0.4465  & 0.3655  & 1.5520  & 22.5433  \\
          &       & T$_{k}$ML-AP-U & 0.4610  & 0.4678  & 0.6231  & 0.5641  & 1.8540  & 1.6072  \\
          &       & $\textup{T}_k$MIA(Ours) & \textbf{0.2830} & \textbf{0.1828} & \textbf{0.2265} & \textbf{0.1739} & \textbf{1.9830} & \textbf{1.2318} \\
\cmidrule{2-9}          & \multirow{4}[1]{*}{3} &ML-CW-U & 0.5420  & 0.3104  & 0.4624  & 0.3788  & 1.8570  & 1.7330  \\
          &       & $k$Fool & 0.4980  & 0.2558  & 0.3229  & 0.2420  & 1.8240  & 13.9070  \\
          &       & T$_{k}$ML-AP-U & 0.5530  & 0.3086  & 0.4722  & 0.3868  & 1.9200  & 1.4759  \\
          &       & $\textup{T}_k$MIA(Ours) & \textbf{0.3480} & \textbf{0.1690} & \textbf{0.2301} & \textbf{0.1699} & \textbf{2.0550} & \textbf{1.2978} \\
\cmidrule{2-9}          & \multirow{4}[1]{*}{5} &ML-CW-U & 0.6311  & 0.1621  & 0.2189  & 0.1527  & 1.9417  & 1.4682  \\
          &       & $k$Fool & 0.5194  & 0.1271  & 0.1531  & 0.0986  & \textbf{2.0291}  & 5.9820  \\
          &       & T$_{k}$ML-AP-U & 0.5776  & 0.1446  & 0.1994  & 0.1386  & 1.9514  & 1.2364  \\
          &       & $\textup{T}_k$MIA(Ours) & \textbf{0.4660} & \textbf{0.1097} & \textbf{0.1328} & \textbf{0.0870} & 2.0048 & \textbf{1.1073} \\
    \midrule
    \multirow{12}[4]{*}{10} & \multirow{4}[1]{*}{2} &ML-CW-U & 0.1111  & 0.4270  & 0.5957  & 0.5494  & 1.6984  & 2.1748  \\
          &       & $k$Fool & 0.1111  & 0.3158  & 0.4231  & 0.3408  & 1.3174  & 63.3469  \\
          &       & T$_{k}$ML-AP-U & 0.4063  & 0.4063  & 0.5802  & 0.5359  & 1.7142  & 1.8900  \\
          &       & $\textup{T}_k$MIA(Ours) & \textbf{0.0793} & \textbf{0.1698} & \textbf{0.2141} & \textbf{0.1501} & \textbf{1.9682} & \textbf{1.2652} \\
\cmidrule{2-9}          & \multirow{4}[1]{*}{3} &ML-CW-U & 0.1905  & 0.3286  & 0.4994  & 0.4258  & 1.9524  & 2.0335  \\
          &       & $k$Fool & 0.1904  & 0.2428  & 0.3426  & 0.2379  & 1.8095  & 41.4890  \\
          &       & T$_{k}$ML-AP-U & 0.1904  & 0.2761  & 0.4655  & 0.3921  & 1.8571  & 1.6844  \\
          &       & $\textup{T}_k$MIA(Ours) & \textbf{0.1904} & \textbf{0.1857} & \textbf{0.2730} & \textbf{0.1908} & \textbf{2.0476} & \textbf{1.5473} \\
\cmidrule{2-9}          & \multirow{4}[1]{*}{5} &ML-CW-U & 1.0000  & 0.2000  & 0.2056  & 0.1315  & 2.0000  & 2.4546  \\
          &       & $k$Fool & 1.0000  & 0.1499  & 0.1661  & 0.1012  & 2.0000  & 15.6337  \\
          &       & T$_{k}$ML-AP-U & 1.0000  & 0.1499  & 0.2414  & 0.1737  & 2.0000  & 2.2539  \\
          &       & $\textup{T}_k$MIA(Ours) & \textbf{1.0000} & \textbf{0.0999} & \textbf{0.1289} & \textbf{0.0723} & \textbf{2.0000} & \textbf{1.8203} \\
    \bottomrule
    \end{tabular}}%
  \label{tab:coco_random_maxiter100_pert2}%
\end{table*}%

\begin{table*}[htbp]
  \centering
  \caption{The rest results of competitors and our method with the maximum iteration 300 and $\delta = 2$ on \underline{COCO} under different $k$ values and sizes of \underline{randomly} selected $\mathcal{S}$, where $\Delta$ refers to the difference between the original value and the perturbed value of corresponding metrics. $\downarrow$ means the smaller the value the better, and $\uparrow$ is the opposite. The best results under each set of parameters are bolded.}
  \renewcommand\arraystretch{0.9}
  \setlength{\tabcolsep}{4mm}{
    \begin{tabular}{c|cc|cccccc}
    \toprule
    $k$   & $\left | \mathcal{S} \right |$ & Methods & $\Delta \textup{T}_k \textup{Acc} \downarrow$ & $\Delta \textup{P}@k \downarrow$ & $\Delta \textup{mAP}@k \downarrow$ & $\Delta \textup{NDCG}@k \downarrow$ & $\Delta l \uparrow$ & $\textup{APer} \downarrow$ \\
    \midrule
    \midrule
    \multirow{8}[2]{*}{3} & \multirow{4}[1]{*}{2} &ML-CW-U & 0.6850  & 0.5367  & 0.6568  & 0.5965  & 2.0000  & 1.6477  \\
          &       & $k$Fool & 0.6260  & 0.3943  & 0.4527  & 0.3833  & 1.6360  & 18.2784  \\
          &       & T$_{k}$ML-AP-U & 0.6640  & 0.4933  & 0.6239  & 0.5599  & 2.0000  & 1.3426  \\
          &       & $\textup{T}_k$MIA(Ours) & \textbf{0.3740} & \textbf{0.2240} & \textbf{0.2642} & \textbf{0.2203} & \textbf{2.0000} & \textbf{1.0935} \\
\cmidrule{2-9}          & \multirow{4}[1]{*}{3} &ML-CW-U & 0.7820  & 0.3693  & 0.5004  & 0.4012  & 2.0250  & 1.6598  \\
          &       & $k$Fool & 0.6580  & 0.2813  & 0.3183  & 0.2376  & 1.9470  & 10.0718  \\
          &       & T$_{k}$ML-AP-U & 0.7540  & 0.3533  & 0.4887  & 0.3919  & 2.0260  & 1.3381  \\
          &       & $\textup{T}_k$MIA(Ours) & \textbf{0.4860} & \textbf{0.2087} & \textbf{0.2684} & \textbf{0.2063} & \textbf{2.0460} & \textbf{1.2561} \\
    \midrule
    \multirow{12}[4]{*}{10} & \multirow{4}[1]{*}{2} &ML-CW-U & 0.1111  & 0.4635  & 0.6450  & 0.5938  & 2.0000  & 2.3947  \\
          &       & $k$Fool & 0.1111  & 0.2952  & 0.4132  & 0.3331  & 1.3333  & 61.5793  \\
          &       & T$_{k}$ML-AP-U & \textbf{0.1111} & 0.4555  & 0.6351  & 0.5859  & 2.0000  & 2.0634  \\
          &       & $\textup{T}_k$MIA(Ours) & 0.3450  & \textbf{0.1704} & \textbf{0.2319} & \textbf{0.1716} & \textbf{2.0710} & \textbf{1.3283} \\
\cmidrule{2-9}          & \multirow{4}[1]{*}{3} &ML-CW-U & 0.1905  & 0.3381  & 0.5166  & 0.4420  & 2.0000  & 2.1017  \\
          &       & $k$Fool & \textbf{0.1904}  & 0.2476  & 0.3374  & 0.2406  & 1.8095  & 38.4887  \\
          &       & T$_{k}$ML-AP-U & 0.1904  & 0.3095  & 0.4889  & 0.4174  & 2.0000  & 1.8292  \\
          &       & $\textup{T}_k$MIA(Ours) & 0.4514 & \textbf{0.1087} & \textbf{0.1308} & \textbf{0.0862} & \textbf{2.0145} & \textbf{1.1295} \\
\cmidrule{2-9}          & \multirow{4}[1]{*}{5} &ML-CW-U & \textbf{0.5000} & 0.1000  & \textbf{0.1163} & \textbf{0.0698} & 2.0000  & \textbf{1.6352} \\
          &       & $k$Fool & 0.5000  & 0.0999  & 0.1211  & 0.0714  & 2.0000  & 17.5303  \\
          &       & T$_{k}$ML-AP-U & 0.5000  & 0.0999  & 0.1914  & 0.1418  & 2.0000  & 2.2226  \\
          &       & $\textup{T}_k$MIA(Ours) & 1.0000  & \textbf{0.0999} & 0.1472  & 0.0805  & \textbf{2.0000} & 1.6473  \\
    \bottomrule
    \end{tabular}}%
  \label{tab:coco_random_rest_maxiter300_pert2}%
\end{table*}%

\begin{table*}[htbp]
  \centering
  \caption{The results of competitors and our method with the maximum iteration 500 and $\delta = 3$ on \underline{COCO} under different $k$ values and sizes of \underline{randomly} selected $\mathcal{S}$, where $\Delta$ refers to the difference between the original value and the perturbed value of corresponding metrics. $\downarrow$ means the smaller the value the better, and $\uparrow$ is the opposite. The best results under each set of parameters are bolded.}
  \renewcommand\arraystretch{0.9}
  \setlength{\tabcolsep}{4mm}{
    \begin{tabular}{c|cc|cccccc}
    \toprule
    $k$   & $\left | \mathcal{S} \right |$ & Methods & $\Delta \textup{T}_k \textup{Acc} \downarrow$ & $\Delta \textup{P}@k \downarrow$ & $\Delta \textup{mAP}@k \downarrow$ & $\Delta \textup{NDCG}@k \downarrow$ & $\Delta l \uparrow$ & $\textup{APer} \downarrow$ \\
    \midrule
    \midrule
    \multirow{4}[1]{*}{3} & \multirow{4}[1]{*}{3} &ML-CW-U & 0.9740  & 0.7177  & 0.8457  & 0.7674  & 3.0000  & 2.0312  \\
          &       & $k$Fool & 0.8300  & 0.4526  & 0.5194  & 0.4236  & 2.2590  & 32.7336  \\
          &       & T$_{k}$ML-AP-U & 0.9630  & 0.6996  & 0.8305  & 0.7513  & 3.0000  & 1.6714  \\
          &       & $\textup{T}_k$MIA(Ours) & \textbf{0.5890} & \textbf{0.3166} & \textbf{0.3773} & \textbf{0.3108} & \textbf{3.0000} & \textbf{1.5890} \\
    \midrule
    \multirow{8}[2]{*}{5} & \multirow{4}[1]{*}{3} &ML-CW-U & 0.6390  & 0.6286  & 0.7825  & 0.7180  & 3.0000  & 2.2634  \\
          &       & $k$Fool & 0.5690  & 0.4020  & 0.5021  & 0.4124  & 2.1600  & 100.0775  \\
          &       & T$_{k}$ML-AP-U & 0.6370  & 0.6128  & 0.7758  & 0.7078  & 3.0000  & 1.8995  \\
          &       & $\textup{T}_k$MIA(Ours) & \textbf{0.4280} & \textbf{0.2480} & \textbf{0.3110} & \textbf{0.2395} & \textbf{3.0000} & \textbf{1.5833} \\
\cmidrule{2-9}          & \multirow{4}[1]{*}{5} &ML-CW-U & 0.9272  & 0.3592  & 0.5382  & 0.4132  & 3.0243  & 2.0656  \\
          &       & $k$Fool & 0.7087  & 0.2359  & 0.2947  & \textbf{0.2044}  & 2.7475  & 35.2941  \\
          &       & T$_{k}$ML-AP-U & 0.9077  & 0.3262  & 0.5086  & 0.3872  & 3.0145  & 1.7110  \\
          &       & $\textup{T}_k$MIA(Ours) & \textbf{0.6553} & \textbf{0.2106} & \textbf{0.2909} & 0.2053 & \textbf{3.0388} & \textbf{1.6672} \\
    \midrule
    \multirow{8}[2]{*}{10} & \multirow{4}[1]{*}{3} &ML-CW-U & 0.1905  & 0.5905  & 0.7404  & 0.6973  & 3.0000  & 2.4183  \\
          &       & $k$Fool & 0.1904  & 0.3476  & 0.4753  & 0.3706  & 1.8571  & 101.8643  \\
          &       & T$_{k}$ML-AP-U & 0.1904  & 0.5571  & 0.7413  & 0.6858  & 3.0000  & 2.1472  \\
          &       & $\textup{T}_k$MIA(Ours) & \textbf{0.1904} & \textbf{0.2380} & \textbf{0.3179} & \textbf{0.2297} & \textbf{3.0000} & \textbf{1.7015} \\
\cmidrule{2-9}          & \multirow{4}[1]{*}{5} &ML-CW-U & 1.0000  & 0.4000  & 0.6626  & 0.5137  & 3.0000  & 2.9090  \\
          &       & $k$Fool & 1.0000  & \textbf{0.2500}  & \textbf{0.2865}  & \textbf{0.1769}  & 2.0000  & 10.2294  \\
          &       & T$_{k}$ML-AP-U & 1.0000  & 0.3999  & 0.5386  & 0.4225  & 3.0000  & 2.5256  \\
          &       & $\textup{T}_k$MIA(Ours) & \textbf{1.0000} & 0.3000 & 0.4126 & 0.2507 & \textbf{3.0000} & \textbf{2.3050} \\
    \bottomrule
    \end{tabular}}%
  \label{tab:coco_random_maxiter500_pert3}%
\end{table*}%

\begin{table*}[htbp]
  \centering
  \caption{The results of competitors and our method with the maximum iteration 100 and $\delta = 2$ on \underline{NUS} under different $k$ values and sizes of \underline{randomly} selected $\mathcal{S}$, where $\Delta$ refers to the difference between the original value and the perturbed value of corresponding metrics. $\downarrow$ means the smaller the value the better, and $\uparrow$ is the opposite. The best results under each set of parameters are bolded.}
  \renewcommand\arraystretch{0.9}
  \setlength{\tabcolsep}{4mm}{
    \begin{tabular}{c|cc|cccccc}
    \toprule
    $k$   & $\left | \mathcal{S} \right |$ & Methods & $\Delta \textup{T}_k \textup{Acc} \downarrow$ & $\Delta \textup{P}@k \downarrow$ & $\Delta \textup{mAP}@k \downarrow$ & $\Delta \textup{NDCG}@k \downarrow$ & $\Delta l \uparrow$ & $\textup{APer} \downarrow$ \\
    \midrule
    \midrule
    \multirow{4}[1]{*}{2} & \multirow{4}[1]{*}{2} &ML-CW-U & 0.4690  & 0.3240  & 0.3710  & 0.3337  & 1.2820  & 1.3689  \\
          &       & $k$Fool & 0.7420  & 0.5160  & 0.5595  & 0.5042  & \textbf{1.8820} & 19.0300  \\
          &       & T$_{k}$ML-AP-U & 0.5270  & 0.3670  & 0.4235  & 0.3819  & 1.3780  & \textbf{1.1840} \\
          &       & $\textup{T}_k$MIA(Ours) & \textbf{0.3740} & \textbf{0.2325} & \textbf{0.2600} & \textbf{0.2253} & 1.5990  & 1.2962  \\
    \midrule
    \multirow{8}[2]{*}{3} & \multirow{4}[1]{*}{2} &ML-CW-U & 0.2980  & 0.1963  & 0.2546  & 0.2240  & 0.9200  & 1.5029  \\
          &       & $k$Fool & 0.6710  & 0.5436  & 0.5909  & 0.5443  & \textbf{1.9110} & 44.2922  \\
          &       & T$_{k}$ML-AP-U & 0.3570  & 0.2466  & 0.3167  & 0.2811  & 1.0300  & \textbf{1.3011} \\
          &       & $\textup{T}_k$MIA(Ours) & \textbf{0.2560} & \textbf{0.1536} & \textbf{0.1775} & \textbf{0.1473} & 1.3580  & 1.3524  \\
\cmidrule{2-9}          & \multirow{4}[1]{*}{3} &ML-CW-U & \textbf{0.3420}  & \textbf{0.1433}  & 0.1851  & 0.1436  & 1.0320  & 1.4344  \\
          &       & $k$Fool & 0.6340  & 0.2740  & 0.3226  & 0.2462  & \textbf{2.0050} & 12.0781  \\
          &       & T$_{k}$ML-AP-U & 0.4050  & 0.1743  & 0.2347  & 0.1844  & 1.1380  & \textbf{1.2434} \\
          &       & $\textup{T}_k$MIA(Ours) & 0.3770 & 0.1486 & \textbf{0.1730} & \textbf{0.1280} & 1.3880  & 1.3605  \\
    \midrule
    \multirow{12}[4]{*}{5} & \multirow{4}[1]{*}{2} &ML-CW-U & 0.2670  & 0.1332  & 0.1819  & 0.1477  & 0.4920  & 1.6755  \\
          &       & $k$Fool & 0.6020  & 0.5782  & 0.6469  & 0.5876  & \textbf{1.8700} & 85.8662  \\
          &       & T$_{k}$ML-AP-U & 0.2820  & 0.1678  & 0.2263  & 0.1918  & 0.6020  & 1.4976  \\
          &       & $\textup{T}_k$MIA(Ours) & \textbf{0.2480} & \textbf{0.1196} & \textbf{0.1397} & \textbf{0.1008} & 1.0620  & \textbf{1.4137} \\
\cmidrule{2-9}          & \multirow{4}[1]{*}{3} &ML-CW-U & 0.2522  & 0.1009  & 0.1334  & 0.1022  & 0.6959  & 1.6779  \\
          &       & $k$Fool & 0.6171  & 0.3506  & 0.4237  & 0.3335  & \textbf{2.0125} & 35.2714  \\
          &       & T$_{k}$ML-AP-U & 0.2593  & 0.1202  & 0.1674  & 0.1330  & 0.7781  & \textbf{1.4853} \\
          &       & $\textup{T}_k$MIA(Ours) & \textbf{0.2432} & \textbf{0.0994} & \textbf{0.1194} & \textbf{0.0846} & 1.0930  & 1.5154  \\
\cmidrule{2-9}          & \multirow{4}[1]{*}{5} &ML-CW-U & 0.3750  & 0.0833  & 0.1124  & 0.0716  & 1.2083  & 1.2708  \\
          &       & $k$Fool & 0.5416  & 0.1333  & 0.1572  & 0.1048  & \textbf{2.1666} & 3.2104  \\
          &       & T$_{k}$ML-AP-U & \textbf{0.2916} & \textbf{0.0750} & 0.1001  & 0.0693  & 1.3750  & 1.1406  \\
          &       & $\textup{T}_k$MIA(Ours) & 0.3333  & 0.0833  & \textbf{0.0936} & \textbf{0.0609} & 1.5416  & \textbf{1.0140} \\
    \bottomrule
    \end{tabular}}%
  \label{tab:nus_random_maxiter100_pert2}%
\end{table*}%

\begin{table*}[htbp]
  \centering
  \caption{The rest results of competitors and our method with the maximum iteration 300 and $\delta = 2$ on \underline{NUS} under different $k$ values and sizes of \underline{randomly} selected $\mathcal{S}$, where $\Delta$ refers to the difference between the original value and the perturbed value of corresponding metrics. $\downarrow$ means the smaller the value the better, and $\uparrow$ is the opposite. The best results under each set of parameters are bolded.}
  \renewcommand\arraystretch{0.9}
  \newcommand{\tabincell}[2]{\begin{tabular}{@{}#1@{}}#2\end{tabular}}
  \setlength{\tabcolsep}{4mm}{
    \begin{tabular}{c|cc|cccccc}
    \toprule
    $k$   & $\left | \mathcal{S} \right |$ & Methods & $\Delta \textup{T}_k \textup{Acc} \downarrow$ & $\Delta \textup{P}@k \downarrow$ & $\Delta \textup{mAP}@k \downarrow$ & $\Delta \textup{NDCG}@k \downarrow$ & $\Delta l \uparrow$ & $\textup{APer} \downarrow$ \\
    \midrule
    \midrule
    \multirow{4}[1]{*}{2} & \multirow{4}[1]{*}{2} &ML-CW-U & 0.7630  & 0.5495  & 0.6230  & 0.5677  & 1.8840  & 2.0010  \\
          &       & $k$Fool & 0.7530  & 0.5285  & 0.5737  & 0.5186  & 1.9030  & 28.4019  \\
          &       & T$_{k}$ML-AP-U & 0.7930  & 0.5720  & 0.6550  & 0.5971  & \textbf{1.9250} & \textbf{1.6244} \\
          &       & $\textup{T}_k$MIA(Ours) & \textbf{0.5220} & \textbf{0.3475} & \textbf{0.3837} & \textbf{0.3408} & 1.9170  & 1.6469  \\
    \midrule
    \multirow{8}[2]{*}{3} & \multirow{4}[1]{*}{2} &ML-CW-U & 0.6190  & 0.4503  & 0.5606  & 0.5017  & 1.7210 & 2.2771  \\
          &       & $k$Fool & 0.6680  & 0.5396  & 0.5857  & 0.5385  & \textbf{1.9140}  & 49.0967  \\
          &       & T$_{k}$ML-AP-U & 0.6640  & 0.5053  & 0.6261  & 0.5637  & 1.8390  & 1.8981  \\
          &       & $\textup{T}_k$MIA(Ours) & \textbf{0.4110} & \textbf{0.2673} & \textbf{0.3030} & \textbf{0.2577} & 1.8160  & \textbf{1.8075} \\
\cmidrule{2-9}          & \multirow{4}[1]{*}{3} &ML-CW-U & 0.7090  & 0.3383  & 0.4390  & 0.3504  & 1.7770  & 2.2154  \\
          &       & $k$Fool & 0.6360  & 0.2733  & 0.3235  & 0.2467  & \textbf{2.0040} & 13.1977  \\
          &       & T$_{k}$ML-AP-U & 0.7770  & 0.3790  & 0.5008  & 0.4029  & 1.8780  & \textbf{1.8377} \\
          &       & $\textup{T}_k$MIA(Ours) & \textbf{0.6010} & \textbf{0.2730} & \textbf{0.3158} & \textbf{0.2412} & 1.8880  & 1.9296  \\
    \bottomrule
    \end{tabular}}%
  \label{tab:nus_random_rest_maxiter300_pert2}%
\end{table*}%

\begin{table*}[htbp]
  \centering
  \caption{The results of competitors and our method with the maximum iteration 500 and $\delta = 3$ on \underline{NUS} under different $k$ values and sizes of \underline{randomly} selected $\mathcal{S}$, where $\Delta$ refers to the difference between the original value and the perturbed value of corresponding metrics. $\downarrow$ means the smaller the value the better, and $\uparrow$ is the opposite. The best results under each set of parameters are bolded.}
  \renewcommand\arraystretch{0.9}
  \setlength{\tabcolsep}{4mm}{
    \begin{tabular}{c|c|c|cccccc}
    \toprule
    $k$   & $\left | \mathcal{S} \right |$ & Methods & $\Delta \textup{T}_k \textup{Acc} \downarrow$ & $\Delta \textup{P}@k \downarrow$ & $\Delta \textup{mAP}@k \downarrow$ & $\Delta \textup{NDCG}@k \downarrow$ & $\Delta l \uparrow$ & $\textup{APer} \downarrow$ \\
    \midrule
    \midrule
    \multirow{4}[1]{*}{3} & \multirow{4}[1]{*}{3} &ML-CW-U & 0.9340  & 0.6833  & 0.8049  & 0.7308  & 2.8030  & 2.8624  \\
          &       & $k$Fool & 0.9630  & 0.7353  & 0.8002  & 0.7337  & 2.8630  & 83.3383  \\
          &       & T$_{k}$ML-AP-U & 0.9680  & 0.7450  & 0.8606  & 0.7917  & \textbf{2.9390} & \textbf{2.3481} \\
          &       & $\textup{T}_k$MIA(Ours) & \textbf{0.7680} & \textbf{0.4893} & \textbf{0.5452} & \textbf{0.4729} & 2.8600  & 2.5979  \\
    \midrule
    \multirow{8}[2]{*}{5} & \multirow{4}[1]{*}{3} &ML-CW-U & 0.6565  & 0.5574  & 0.7041  & 0.6318  & 2.5671  & 3.2198  \\
          &       & $k$Fool & 0.6905  & 0.6991  & 0.7644  & 0.7095  & 2.8246  & 162.6897  \\
          &       & T$_{k}$ML-AP-U & 0.6762  & 0.6314  & 0.7772  & 0.7104  & \textbf{2.8389} & \textbf{2.6935} \\
          &       & $\textup{T}_k$MIA(Ours) & \textbf{0.5402} & \textbf{0.3667} & \textbf{0.4086} & \textbf{0.3237} & 2.6583  & 2.7339  \\
\cmidrule{2-9}          & \multirow{4}[1]{*}{5} &ML-CW-U & 0.8750  & 0.3417  & 0.4689  & 0.3557  & 2.5833  & 2.9803  \\
          &       & $k$Fool & \textbf{0.7500}  & 0.2666  & 0.3309  & 0.2337  & \textbf{3.0000} & 15.4970  \\
          &       & T$_{k}$ML-AP-U & 0.9166  & 0.3666  & 0.5490  & 0.4223  & 2.9166  & \textbf{2.4811} \\
          &       & $\textup{T}_k$MIA(Ours) & 0.7916 & \textbf{0.2749} & \textbf{0.3258} & \textbf{0.2214} & 2.9580  & 2.5540  \\
    \bottomrule
    \end{tabular}}%
  \label{tab:nus_random_maxiter500_pert3}%
\end{table*}%

\begin{figure*}[t]
  \centering
  \includegraphics[width=\linewidth]{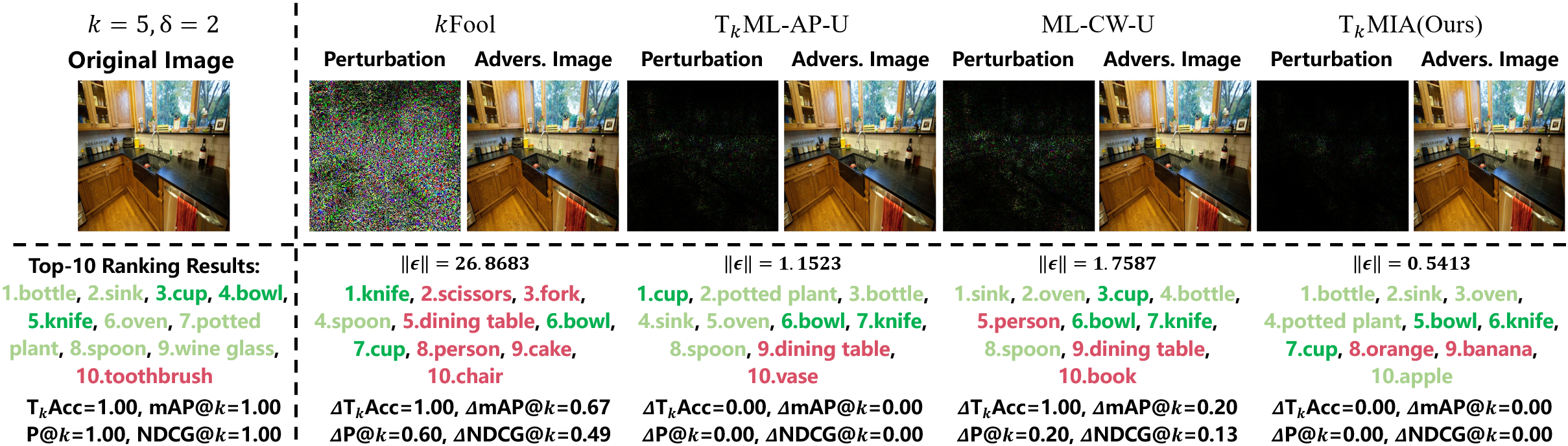}
\caption{The top-5 successful performance comparisons under \textit{random selection scheme} on COCO, where the maximum iteration is 300. All perturbation intensities are magnified by a factor of 100 to enhance contrast and visibility. The specified labels, relevant labels, and irrelevant labels are marked with \textcolor[rgb]{ 0,  .69,  .314}{dark green}, \textcolor[rgb]{ .573,  .816,  .314}{light green}, and \textcolor[rgb]{ .847,  .31,  .4}{red}, respectively.}
\label{fig:coco_random_example}
\end{figure*}

\end{document}